\documentclass[twoside,11pt]{article}

%

\usepackage{jmlr2e}
\usepackage{epsfig}
\usepackage{graphicx}
\usepackage{algorithm}
\usepackage{algorithmic,amssymb}
\usepackage{amsmath}\allowdisplaybreaks
\usepackage{natbib}
\usepackage{color}






\begin{document}

\title{Asynchronous Stochastic  Block Coordinate Descent \\ with  Variance Reduction  }

\author{\name Bin Gu \email jsgubin@gmail.com      \\
   \name Zhouyuan Huo \email {zhouyuan.huo@mavs.uta.edu} \\
   \name Heng Huang \email {heng@uta.edu} \\
\addr Department of Computer Science and Engineering\\
       University of Texas at Arlington\\
      }

\editor{}

\maketitle

\begin{abstract}
 Asynchronous parallel  implementations  for stochastic optimization    have received huge successes in  theory and practice recently. Asynchronous implementations with lock-free are more efficient than the one with writing or reading lock. In this paper, we focus on  a composite objective function consisting    of a  smooth convex function $f$  and a block separable convex function, which widely exists in  machine learning and computer vision. We propose an asynchronous stochastic  block coordinate descent algorithm with the accelerated technology of variance reduction  (AsySBCDVR), which are with lock-free in the implementation and analysis. AsySBCDVR is particularly  important because it  can scale well with the sample size and dimension simultaneously.  We prove that AsySBCDVR achieves a linear convergence rate when the function $f$ is with the optimal strong
convexity property, and a sublinear rate  when $f$ is with the  general convexity. More importantly, a near-linear speedup on a parallel system with shared memory can be obtained.
\end{abstract}

\begin{keywords}
stochastic optimization, block coordinate descent, parallel computing,  lock-free
\end{keywords}

\section{Introduction}
Stochastic optimization technologies in  theory and practice are emerging  recently due to the demand of handling large scale data.  Specifically, stochastic gradient descent (SGD) algorithms with various kinds of acceleration technologies \citep{zhang2004solving,johnson2013accelerating,schmidt2013minimizing,ghadimi2013stochastic,reddi2015variance}  were proposed to processing large scale smooth convex or nonconvex problems. Also, stochastic  coordinate descent (SCD) algorithms  \citep{takac2014randomized,zhao2014accelerated,lu2015complexity}  and stochastic dual coordinate ascent algorithms  \citep{shalev2014accelerated,Shalev-Shwartz16}  were proposed. For non-smoothing problems, the corresponding proximal algorithms were proposed \citep{xiao2014proximal,lin2014accelerated,nitanda2014stochastic}.
Basically, these algorithms  are sequential algorithms which can not be directly used in parallel environment.

 To scale up the stochastic optimization  algorithms,  asynchronous parallel  implementations  \citep{richtarik2016parallel,chaturapruek2015asynchronous,liu2015asynchronous2,recht2011hogwild,reddi2015variance,lian2016comprehensive,liu2015asynchronous,zhao2016fast,huo2016asynchronous,lian2015asynchronous,avron2015revisiting,hsieh2015passcode,mania2015perturbed,huo2016decoupled,huo2016distributed}   have been proposed recently, and  received huge successes. Among these asynchronous parallel implementations, the ones with lock-free are
more efficient than the ones with writing or reading lock, because they can achieve  near-linear speedup which is the ultimate goal of the parallel computation. In these paper, we focus on the asynchronous parallel implementations with lock-free.

 There have been several asynchronous parallel implementations for the stochastic optimization  algorithms which are totally free of reading and writing locks. For example, \cite{zhao2016fast} proposed an asynchronous parallel algorithm for SVRG and proved the linear convergence.  \cite{liu2015asynchronous} proposed an  asynchronous parallel algorithm for SCD and proved the linear convergence. \cite{avron2015revisiting} proposed an  asynchronous parallel algorithm for SCD to solve a linear system.   \cite{mania2015perturbed} proposed a perturbed iterate framework to analyze the asynchronous parallel algorithms of SGD, SCD and sparse SVRG. \cite{huo2016asynchronous} proposed an asynchronous  SGD algorithm with variance reduction on   non-convex optimization problems and proved the convergence.  \cite{lian2016comprehensive} proposed an asynchronous  stochastic optimization algorithm with  zeroth order and proved the convergence.

 In this paper, we focus on a composite objective function as follows.
   \begin{eqnarray}\label{formulation1}
\min_{x\in \mathbb{R}^n} F(x) = f(x)+g(x)
\end{eqnarray}
where $f(x)=\frac{1}{l}\sum_{i=1}^{l} f_i(x)$, $f_i:\mathbb{R}^n \mapsto \mathbb{R}$ is a smooth convex function. $g:\mathbb{R}^n \mapsto \mathbb{R} \cup \{ \infty \}$ is a block separable, closed, convex, and extended real-valued function. Given a partition $\{ \mathcal{G}_{1},\cdots,{\mathcal{G}_{k}} \}$ of n
coordinates of $x$,  we can write $g(x)$ as  $g(x)=\sum_{j=1}^{k} g_{\mathcal{G}_{j}}(x_{\mathcal{G}_{j}})$. The formulation (\ref{formulation1}) covers  many machine learning and computer vision
problems, for example, Lasso \citep{tibshirani1996regression}, group Lasso \citep{roth2008group}, sparse multiclass classification \citep{blondel2013block}, and so on. Note that, \cite{liu2015asynchronous}  consider the formulation (\ref{formulation1}) with the constraints that $|{\mathcal{G}_{j}}|=1$ for all $j$. Thus, the formulation in  \citep{liu2015asynchronous} is a special case of (\ref{formulation1}). Each iteration in \citep{liu2015asynchronous} only modifies a single component
of $x$  which is an atomic operation in the  parallel system
with shared memory. However, we need to modify a block coordinate ${\mathcal{G}_{j}}$ of $x$ with lock-free for each iteration, which are more complicated in the asynchronous parallel analysis  than the atomic updating in \citep{liu2015asynchronous}.  Due to the complication induced by the block representation of $g(x)$, there have been no asynchronous stochastic  block coordinate descent algorithm with lock-free proposed for handle formulation (\ref{formulation1}), especially on the theoretical analysis.

In this paper, we propose an asynchronous stochastic  block coordinate descent algorithm with the accelerated technology of variance reduction  (AsySBCDVR), which are with lock-free in the implementation and analysis. We prove that AsySBCDVR achieves a linear convergence rate when the function $f$ is with the optimal strong
convexity property, and a sublinear rate  when $f$ is with the  general convexity. More importantly, a near-linear speedup on a parallel system with shared memory can be obtained.

AsySBCDVR is particularly  important because it  can scale well with the sample size and dimension simultaneously. In the big data era, both of the  sample size and dimension  could be  huge at the same time as modern data collection technologies evolve, which demands that the learning algorithms can process large scale datasets with large sample size and high dimension.   AsySBCDVR is based on a doubly stochastic scheme which randomly choose a set of samples and a block coordinate for each iteration. Thus, AsySBCDVR can process large scale datasets.  Especially,  the technology of variance reduction is used to accelerate AsySBCDVR such that AsySBCDVR has the linear or sublinear convergence in different conditions. Thus,  AsySBCDVR can scale well with the sample size and dimension simultaneously.

We organize the rest of the paper as follows.  In Section
 \ref{section_preliminary}, we give some preliminaries. In section \ref{section_algorithm}, we propose our AsySBCDVR algorithm. In Section \ref{convergence_analysis}, we prove the  convergence rate for AsySBCDVR.  Finally, we give some concluding
remarks in Section \ref{conclusion}.
\section{Preliminaries}\label{section_preliminary}
In this section, we introduce  the condition of optimal strong convexity and three different Lipschitz constants and give the corresponding assumptions, which are  critical to  the  analysis of AsySBCDVR.

\noindent \textbf{Optimal Strong Convexity:} \ \
Let $F^*$ denote the optimal value of (\ref{formulation1}), and let $S$ denote the solution set of $F$ such that $F(x)=F^*$, $\forall x \in S$. Firstly, we assume that $S$ is nonempty (i.e., Assumption \ref{ass1}), which is reasonable to (\ref{formulation1}).
\begin{assumption}\label{ass1}
 The solution set $S$ of (\ref{formulation1}) is nonempty.
 \end{assumption}
 Based on $S$, we define $\mathcal{P}_{S(x)}=\arg \min_{y\in S }\| y-x \|^2$ as the Euclidean-norm projection of a vector $x$ onto $S$. Then, we assume that the convex function $f$ is with the  optimal strong convexity (i.e., Assumption \ref{assumption2}).
\begin{assumption}[Optimal strong convexity]\label{assumption2} The convex function $f$ has the condition of    optimal strong convexity with parameter $l>0$ with respect to the optimal set $S$, which means that, $\exists l$   such that, $\forall x$, we have
   \begin{eqnarray}
F(x)-F(\mathcal{P}_{S}(x)) \geq \frac{l}{2} \|x - \mathcal{P}_{S}(x) \|^2
\end{eqnarray}
\end{assumption}
As mentioned in \cite{liu2015asynchronous}, the condition of optimal strong convexity  is significantly weaker than the normal strong convexity condition. And several examples of optimally strongly convex
functions that are not strongly convex are provided in \citep{liu2015asynchronous}.

\noindent \textbf{Lipschitz Smoothness:} \ \ Let $\Delta_j$ denote the zero vector in $\mathbb{R}^n$ except that the block coordinates indexed by the set  ${\mathcal{G}_{j}}$. We define the normal Lipschitz constant ($L_{nor}$), block restricted Lipschitz constant ($L_{res} $) and block coordinate Lipschitz constant ($L_{\max}$)  as follows.
\begin{definition}[Normal Lipschitz constant]\label{definition1}
$L_{nor} $ is the normal Lipschitz constant for $\nabla f_i$ ($\forall i \in \{1,\cdots,l\}$) in (\ref{formulation1}), such that, $\forall x$ and $\forall y$, we have
   \begin{eqnarray}\label{equdef1}
\| \nabla f_i(x) - \nabla f_i(y) \|  \leq L_{nor} \|x - y \|
\end{eqnarray}
\end{definition}

\begin{definition}[Block Restricted Lipschitz constant]\label{definition2}
$L_{res} $ is the block restricted Lipschitz constant for $\nabla f_i$ ($\forall i \in \{1,\cdots,l\}$) in (\ref{formulation1}), such that, $\forall x$, and $\forall j\in \{ 1,\cdots,k\}$,  we have
   \begin{eqnarray}\label{equdef2}
\| \nabla f_i(x+ \Delta_j) - \nabla f_i(x) \|  \leq L_{res} \left \| (\Delta_j)_{\mathcal{G}_{j}} \right \|
\end{eqnarray}
\end{definition}

\begin{definition}[Block Coordinate Lipschitz constant]\label{definition2}
$L_{\max}$  is the block coordinate Lipschitz constant for $\nabla f_i$ ($\forall i \in \{1,\cdots,l\}$) in (\ref{formulation1}), such that, $\forall x$, and $\forall j\in \{ 1,\cdots,k\}$,  we have
   \begin{eqnarray}\label{coordinate_lipschitz_constant1}
\max_{j=1,\cdots,k} \| \nabla f_i(x+ \Delta_j) - \nabla f_i(x) \| \leq L_{\max} \left \| (\Delta_j)_{\mathcal{G}_{j}} \right \|
\end{eqnarray}
(\ref{coordinate_lipschitz_constant1}) is equvilent to the formulation (\ref{coordinate_lipschitz_constant2}).
   \begin{eqnarray} \label{coordinate_lipschitz_constant2}
f_i(x+ \Delta_j) \leq f_i(x) + \langle \nabla_{\mathcal{G}_{j}} f_i(x) , ( \Delta_j )_{\mathcal{G}_{j}}\rangle + \frac{L_{\max}}{2}  \left \| (\Delta_j)_{\mathcal{G}_{j}} \right \|^2
\end{eqnarray}
\end{definition}
Based on $L_{nor}$  $L_{res}$ and $L_{max}$ as defined above, we assume that the function $f_i$ ($\forall i \in \{1,\cdots,l\}$ is Lipschitz smooth with $L_{nor}$  $L_{res}$ and $L_{max}$ (i.e., Assumption \ref{assumption3}). In addition, we define $\Lambda_{res}=\frac{L_{res}}{L_{\max}}$, $\Lambda_{nor}=\frac{L_{nor}}{L_{\max}}$.
\begin{assumption}[Lipschitz smoothness]\label{assumption3}
The function $f_i$ ($\forall i \in \{1,\cdots,l\}$ is Lipschitz smooth with the  normal Lipschitz constant $L_{nor}$, block  restricted Lipschitz constant $L_{res}$ and block coordinate Lipschitz constant $L_{\max}$.
\end{assumption}

 \section{Algorithm}\label{section_algorithm}
 In this section, we propose our AsySBCDVR. AsySBCDVR is designed for  the parallel environment with shared memory, such as  multi-core processors and GPU-accelerators, but it can also work in the parallel environment with distributed memory.

In the  parallel environment with shared memory, all cores in CPU or GPU can read and write  the vector $x$ in the shared memory simultaneously without any lock.  Besides randomly choosing a sample set and a block coordinate, AsySBCDVR is also  accelerated by the variance reduction. Thus, AsySBCDVR has two-layer loops. The outer layer is to parallelly compute the full gradient
 $\nabla f(x^s) = \frac{1}{l} \sum_{i=1}^l \nabla f_i(x^s) $, where the superscript $s$ denotes the $s$-th outer loop. The inner layer is to parallelly and repeatedly  update the vector $x$ in the shared memory.
 Specifically,  all cores repeat the
following steps independently and concurrently without any lock:
 \begin{enumerate}
 \item \textbf{Read:} Read the vector $x$ from the shared memory to the local memory
without reading lock. We use $\widehat{x}^{s+1}_t$ to denote its value, where the subscript $t$ denotes the $t$-th inner loop.
 \item \textbf{Compute:} Randomly  choose a mini-batch $\mathcal{B}$ and a block  coordinate $j$ from $\{1, ...,k\}$, and locally compute  $ \widehat{v}^{s+1}_{\mathcal{G}_j} = \frac{1}{|\mathcal{B}|}\sum_{i\in \mathcal{B}} \nabla_{\mathcal{G}_j} f_i(\widehat{x}^{s+1}_{t})-\frac{1}{|\mathcal{B}|}\sum_{i\in \mathcal{B}}\nabla_{\mathcal{G}_j} f_i(\widetilde{x}^s)  + \nabla_{\mathcal{G}_j} f(\widetilde{x}^{s}) $.
 \item \textbf{Update:} Update the block $j$ of the vector $x$ in the shared memory as $(x_{t+1}^{s+1})_{\mathcal{G}_j} \leftarrow \mathcal{P}_{{\mathcal{G}_j},\frac{\gamma}{L_{\max}}g_j}\left (  (x_{t}^{s+1})_{\mathcal{G}_j} - \frac{\gamma}{L_{\max}}  \widehat{v}^{s+1}_{\mathcal{G}_j}   \right )$   without writing lock.
 \end{enumerate}
The detailed description of AsySBCDVR is  presented in  Algorithm \ref{algorithm2}.  Note that $ \widehat{v}^{s+1}_{\mathcal{G}_j}$ computed  locally is the approximation of $\nabla_{\mathcal{G}_j} f(\widehat{x}^{s+1}_{t})$, and the expectation  of $\widehat{v}^{s+1}_t$ on $\mathcal{B}$ is equal to $\nabla f(\widehat{x}^{s+1}_{t})$ as follows.
\begin{eqnarray}
\mathbb{E}\widehat{v}^{s+1}_{t} & =& \mathbb{E}\left ( \frac{1}{|\mathcal{B}|}\sum_{i\in \mathcal{B}} \nabla f_i(\widehat{x}^{s+1}_{t})-  \frac{1}{|\mathcal{B}|}\sum_{i\in \mathcal{B}}\nabla f_i(\widetilde{x}^s)  + \nabla f(\widetilde{x}^{s}) \right )
\\ &=&  \nabla f(\widehat{x}^{s+1}_{t}) -\nabla f(\widetilde{x}^s)+ \nabla f(\widetilde{x}^s)  \nonumber
\\ &=& \nabla f(\widehat{x}^{s+1}_{t})  \nonumber
\end{eqnarray}
Thus, $\widehat{v}^{s+1}_t$ is called a stochastic gradient of $f(x)$ at $\widehat{x}^{s+1}_{t}$.

Because AsySBCDVR does not use the reading  lock, the vector $\widehat{x}^{s+1}_t$ read into the local  memory may be inconsistent to the vector ${x}^{s+1}_t$ in the shared memory, which means that some components of $\widehat{x}^{s+1}_t$ are same with the ones in ${x}^{s+1}_t$, but others are different to the ones in  ${x}^{s+1}_t$.  However, we can define a set $K(t)$ of  inner iterations, such that,
  \begin{eqnarray} \label{pi_test1}
{x}^{s+1}_t = \widehat{x}^{s+1}_t + \sum_{t' \in K(t)}B_{t'}^{s+1} \Delta^{s+1}_{t'}
\end{eqnarray}
where $t' \leq t-1$, $(\Delta^{s+1}_{t'})_{\mathcal{G}_{j(t')}} = \mathcal{P}_{{\mathcal{G}_{j(t')}},\frac{\gamma}{L_{\max}}g_{j(t')}}\left (  (x_{t'}^{s+1})_{\mathcal{G}_{j(t')}} - \frac{\gamma}{L_{\max}}  \widehat{v}^{s+1}_{t',\mathcal{G}_{j(t')}}   \right ) - ({x}^{s+1}_{t'})_{\mathcal{G}_{j(t')}}$,  $(\Delta^{s+1}_{t'})_{\setminus \mathcal{G}_{j(t')}} = \textbf{0}$, $B_{t'}^{s+1}$ is   a diagonal matrix with
diagonal entries either $1$ or $0$. It is reasonable to assume that $B_{t'}^{s} \neq \textbf{0}$, $\forall {t' \in K(t)}$ (i.e., Assumption \ref{ass5}), and there exists an upper bound $\tau$ such that $\tau \geq t - \min\{t' | t' \in K(t)\}$ (i.e., Assumption \ref{ass4}).
\begin{assumption}[Non zero of $B_{t'}^{s}$]\label{ass5}
For all inner iterations $t$ in AsySBCDVR, $\forall {t' \in K(t)}$, we have that $B_{t'}^{s} \neq \textbf{0}$.
 \end{assumption}

\begin{assumption}[Bound of delay]\label{ass4}
There exists a  upper bound $\tau$ such that $\tau \geq t - \min\{t' | t' \in K(t)\}$ for all inner iterations $t$ in AsySBCDVR.
 \end{assumption}

\begin{algorithm}
\renewcommand{\algorithmicrequire}{\textbf{Input:}}
\renewcommand{\algorithmicensure}{\textbf{Output:}}
\caption{ Asynchronous Stochastic  Block Coordinate Descent  with  Variance Reduction  (AsySBCDVR)}
\begin{algorithmic}[1]
\REQUIRE $\gamma$,  $S$, and $m$.
\ENSURE $x^{S}$.
 \STATE  Initialize  $x^0 \in \mathbb{R}^d$, $p$ threads.
\FOR{$s=0,1,2,S-1$}
\STATE $\widetilde{x}^s\leftarrow x^s$
\STATE \textit{All threads parallelly} compute the full gradient $\nabla f(\widetilde{x}^s) = \frac{1}{l}\sum_i^l \nabla f_i(\widetilde{x}^s)$
\STATE \textit{For each thread}, do:
\FOR{$t=0,1,2,m-1$}
\STATE Randomly sample a mini-batch $\mathcal{B}$ from $\{1, ...,l\}$ with equal probability.
\STATE Randomly choose a block  $j(t)$ from $\{1, ...,k\}$ with equal probability.
\STATE Compute $ \widehat{v}^{s+1}_{\mathcal{G}_{j(t)}} = \frac{1}{|\mathcal{B}|}\sum_{i\in \mathcal{B}} \nabla_{\mathcal{G}_{j(t)}} f_i(\widehat{x}^{s+1}_{t})-\frac{1}{|\mathcal{B}|}\sum_{i\in \mathcal{B}}\nabla_{\mathcal{G}_{j(t)}} f_i(\widetilde{x}^s)  + \nabla_{\mathcal{G}_{j(t)}} f(\widetilde{x}^{s}) $.
 \STATE   $(x_{t+1}^{s+1})_{\mathcal{G}_{j(t)}} \leftarrow \mathcal{P}_{{\mathcal{G}_{j(t)}},\frac{\gamma}{L_{\max}}g_{j(t)}}\left (  (x_{t}^{s+1})_{\mathcal{G}_{j(t)}} - \frac{\gamma}{L_{\max}}  \widehat{v}^{s+1}_{t,\mathcal{G}_{j(t)}}   \right )$. 

\STATE $(x_{t+1}^{s+1})_{\setminus \mathcal{G}_{j(t)}} \leftarrow (x_{t}^{s+1})_{\setminus \mathcal{G}_{j(t)}} $.
\ENDFOR
\STATE ${x}^{s+1}\leftarrow x_{m}^{s+1}$
\ENDFOR
\end{algorithmic}
\label{algorithm2}
\end{algorithm}

\section{Convergence Analysis} \label{convergence_analysis}
In this section, we follow the analysis of \citep{liu2015asynchronous} and prove the convergence rate of AsyDSCDVR (Theorem \ref{theorem2}). Specifically, AsySBCDVR achieves a linear convergence rate when the function $f$ is with the optimal strong
convexity property, and a sublinear rate  when $f$ is with the  general convexity.

Before providing  the theoretical analysis, we   give the definitions of $\widehat{x}_{t,t'+1}^s$, ${\overline{x}}_{t+1}^{s}$ and the explanation of $x^s_t$ used in the analysis as follows.

 \begin{enumerate}
\item \textbf{$\widehat{x}_{t,t'}^s$:} Assume the indices in  $ K(t)$ are sorted in the increasing order, we use $K(t)_{t'}$ to denote the $t'$-th index in $K(t)$. For $t'= 0,1,\cdots,|K(t)|$, we define
\begin{eqnarray}
\widehat{x}_{t,t'}^s =\widehat{x}_{t}^s+ \sum_{t''=1}^{t'} \left ( B_{K(t)_{t''}}^{s} \Delta^{s}_{K(t)_{t''}} \right ) =\widehat{x}_{t}^s+ \sum_{t''=1}^{t'} \left ( x_{K(t)_{t''}+1} - x_{K(t)_{t''}} ) \right )
\end{eqnarray}
Thus, we have that
\begin{eqnarray}
\widehat{x}_{t}^s & = & \widehat{x}_{t,0}^s
\\ {x}_{t}^s & = & \widehat{x}_{t,|K(t)|}^s
\\ {x}_{t}^s - \widehat{x}_{t}^s & = & \sum_{t''=0}^{|K(t)|-1} \left ( B_{K(t)_{t''}}^{s+1} \Delta^{s+1}_{K(t)_{t''}} \right ) = \sum_{t'=0}^{|K(t)|-1} \left ( \widehat{x}_{t,t'+1}^s - \widehat{x}_{t,t'}^s \right )
\\ \nabla f(x_t^s) -\nabla f(\widehat{x}_t^s ) &=& \sum_{t'=0}^{ |K(t)|-1} \left ( \nabla f(\widehat{x}_{t,t'+1}^s) -\nabla f(\widehat{x}_{t,t'} ) \right )
\end{eqnarray}

\item  \textbf{${\overline{x}}_{t+1}^{s}$:} ${\overline{x}}_{t+1}^{s}$ is defined as:
\begin{eqnarray} \label{equ_section4_9}
{\overline{x}}_{t+1}^{s} \stackrel{\rm def}{=}  \mathcal{P}_{\frac{\gamma}{L_{\max}}g}\left ( x_t^{s} - \frac{\gamma}{L_{\max}} \widehat{v}_{t}^{s}   \right )
\end{eqnarray}
Based on (\ref{equ_section4_9}), it is easy to verify that $({\overline{x}}_{t+1}^{s})_{\mathcal{G}_{j(t)}} =(x_{t+1}^{s+1})_{\mathcal{G}_{j(t)}}$. Thus, we have $\mathbb{E}_{j(t)} ({x}_{t+1}^{s} - {x}_{t}^{s}) = \frac{1}{k} \left ( {\overline{x}}_{t+1}^{s} - {x}_{t}^{s}  \right )$. It means that $  {\overline{x}}_{t+1}^{s} - {x}_{t}^{s}$ captures the expectation  of ${x}_{t+1}^{s} - {x}_{t}^{s}$.

\item \textbf{$x^s_t$:} As mentioned previously, AsySBCDVR does not use any locks in the reading and writing. Thus,  in the line 10 of Algorithm \ref{algorithm2}, $x^s_t$ (left side of `$\leftarrow$') updated in the shared memory may be inconsistent with the idea one (right side of `$\leftarrow$') computed by the proximal operator. In the analysis, we use $x^s_t$ to denote the idea one computed by the proximal operator.  Same as mentioned in \citep{mania2015perturbed}, there might not be an actual time the idea ones exist in the
shared memory, except the first and last  iterates for each  outer loop. It is noted that, $x^s_0$ and $x^s_m$ are exactly what is stored in shared memory. Thus, we only consider the idea $x^s_t$ in the analysis.
\end{enumerate}
Then, we give two inequalities in  Lemma \ref{lemma0.5} and \ref{lemma0.6} respectively. Based on Lemma \ref{lemma0.5} and \ref{lemma0.6}, we   prove that $\mathbb{E}\| x_{t-1}^s - {\overline{x}}_{t}^s \|^2 \leq \rho \mathbb{E}\| x_{t}^s - {\overline{x}}_{t+1}^s\|^2 $ (Lemma \ref{lemma1}), where $\rho>1$ is a user defined parameter. Then, we prove  the monotonicity of the expectation of the objectives $\mathbb{E} F(x_{t+1}^s) \leq \mathbb{E} F(x_{t}^s)$ (Lemma \ref{lemma2}). Note that the analyses only consider the case $|\mathcal{B}|=1$  without loss of generality. The  case of $|\mathcal{B}|>1$ can be proved similarly.

\begin{lemma} \label{lemma0.5} For $\left \| \nabla f(x_t^s) -\nabla f(\widehat{x}_t^s ) \right \|$ in each iteration of AsySBCDVR, we have its upper bound as
\begin{eqnarray}
\left \| \nabla f(x_t^s) -\nabla f(\widehat{x}_t^s ) \right \| \leq L_{res} \sum_{t' \in K(t)} \left \|  \Delta^{s}_{t'} \right \|
\end{eqnarray}
\end{lemma}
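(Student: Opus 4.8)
The plan is to exploit the telescoping decomposition of $\nabla f(x_t^s) - \nabla f(\widehat{x}_t^s)$ that the authors have already set up in the list item for $\widehat{x}_{t,t'}^s$. Starting from the identity
\begin{eqnarray}
\nabla f(x_t^s) - \nabla f(\widehat{x}_t^s) = \sum_{t'=0}^{|K(t)|-1} \left( \nabla f(\widehat{x}_{t,t'+1}^s) - \nabla f(\widehat{x}_{t,t'}^s) \right),
\end{eqnarray}
I would apply the triangle inequality to pull the norm inside the sum, obtaining $\left\| \nabla f(x_t^s) - \nabla f(\widehat{x}_t^s) \right\| \leq \sum_{t'=0}^{|K(t)|-1} \left\| \nabla f(\widehat{x}_{t,t'+1}^s) - \nabla f(\widehat{x}_{t,t'}^s) \right\|$.

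The key observation is that consecutive iterates $\widehat{x}_{t,t'}^s$ and $\widehat{x}_{t,t'+1}^s$ differ only in a single block coordinate: by definition $\widehat{x}_{t,t'+1}^s - \widehat{x}_{t,t'}^s = B_{K(t)_{t'+1}}^s \Delta^s_{K(t)_{t'+1}}$, and $\Delta^s_{t'}$ is supported on the block $\mathcal{G}_{j(t')}$ (with the complement set to zero), while $B_{t'}^s$ is a $0/1$ diagonal matrix that cannot enlarge the support. Hence the difference is of the form $\Delta_j$ for the appropriate block $j = j(K(t)_{t'+1})$, so the Block Restricted Lipschitz constant (Definition \ref{definition2}, equation (\ref{equdef2})) applies to each term: since $f = \frac{1}{l}\sum_i f_i$ and each $\nabla f_i$ satisfies the $L_{res}$ bound, so does $\nabla f$. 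This gives
\begin{eqnarray}
\left\| \nabla f(\widehat{x}_{t,t'+1}^s) - \nabla f(\widehat{x}_{t,t'}^s) \right\| \leq L_{res} \left\| \left( B_{K(t)_{t'+1}}^s \Delta^s_{K(t)_{t'+1}} \right)_{\mathcal{G}_{j}} \right\| \leq L_{res} \left\| \Delta^s_{K(t)_{t'+1}} \right\|,
\end{eqnarray}
where the last step uses that a $0/1$ diagonal projection is non-expansive and that $\Delta^s_{t'}$ is already supported on its block. Summing over $t'$ and re-indexing the sum over $K(t)$ yields the claimed bound $\left\| \nabla f(x_t^s) - \nabla f(\widehat{x}_t^s) \right\| \leq L_{res} \sum_{t' \in K(t)} \left\| \Delta^s_{t'} \right\|$.

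The main obstacle is being careful about the bookkeeping: verifying that each intermediate step $\widehat{x}_{t,t'}^s \to \widehat{x}_{t,t'+1}^s$ genuinely changes only one block (so that Definition \ref{definition2} is legitimately invoked rather than the weaker normal Lipschitz bound), and that applying $L_{res}$ to $\nabla f$ rather than to an individual $\nabla f_i$ is justified by convexity of the norm / Jensen's inequality over the average $\frac{1}{l}\sum_i \nabla f_i$. There is also a minor subtlety in that the decomposition implicitly requires Assumption \ref{ass5} ($B_{t'}^s \neq \mathbf{0}$) so that each term in the telescoping sum corresponds to a nontrivial update on a well-defined block, though this only affects whether the bound is tight, not its validity. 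Everything else is a routine application of the triangle inequality.
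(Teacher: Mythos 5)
Your proposal is correct and follows essentially the same route as the paper's own proof: telescoping over the intermediate iterates $\widehat{x}_{t,t'}^s$, applying the triangle inequality, invoking the block restricted Lipschitz constant on each single-block difference, and absorbing the $0/1$ diagonal matrix $B^s_{K(t)_{t'}}$ via $\| B \| \leq 1$. Your added remarks on extending $L_{res}$ from each $\nabla f_i$ to the average $\nabla f$ and on the support of $B^s_{t'}\Delta^s_{t'}$ are points the paper glosses over, but they do not change the argument.
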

\begin{proof}
Based on , we have that
\begin{eqnarray}
&&\left \| \nabla f(x_t^s) -\nabla f(\widehat{x}_t^s ) \right \|  =   \left \| \sum_{t'=0}^{ |K(t)|-1}  \nabla f(\widehat{x}_{t,t'+1}^s) -\nabla f(\widehat{x}_{t,t'}^s ) \right \|
\\  & \leq & \nonumber \sum_{t'=0}^{ |K(t)|-1} \left \|   \nabla f(\widehat{x}_{t,t'+1}^s) -\nabla f(\widehat{x}_{t,t'}^s ) \right \|
 \leq  \nonumber L_{res} \sum_{t'=0}^{ |K(t)|-1} \left \|   \widehat{x}_{t,t'+1}^s - \widehat{x}_{t,t'}^s  \right \|
\\  & = & \nonumber L_{res} \sum_{t'=0}^{ |K(t)|-1} \left \|  B^s_{K(t)_{t'}} \Delta^s_{K(t)_{t'}}  \right \|
 \leq  \nonumber L_{res} \sum_{t'=0}^{ |K(t)|-1} \left \|  B^s_{K(t)_{t'}} \right \| \left \| \Delta^s_{K(t)_{t'}}  \right \|
\leq  \nonumber L_{res} \sum_{t' \in K(t)} \left \|  \Delta^{s}_{t'} \right \|
\end{eqnarray}
This completes the proof.
\end{proof}

\begin{lemma} \label{lemma0.6}  In each iteration of AsySBCDVR, $\forall x$, we have the following inequality.
\begin{eqnarray}
\label{lemmaequ0.05}
 \left \langle  (\widehat{v}^s_t)_{\mathcal{G}_{j(t)}} + \frac{L_{\max}}{\gamma} \Delta^{s}_{t}, (x_{t+1}^s - x  )_{\mathcal{G}_{j(t)}}\right  \rangle   + g_{\mathcal{G}_{j(t)}}\left ( (x_{t+1}^s)_{\mathcal{G}_{j(t)}} \right )    - g_{\mathcal{G}_{j(t)}}\left ( (x)_{\mathcal{G}_{j(t)}} \right ) \leq 0
\end{eqnarray}
\end{lemma}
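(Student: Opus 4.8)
The plan is to derive \eqref{lemmaequ0.05} directly from the optimality condition for the block-proximal update that defines $(x_{t+1}^s)_{\mathcal{G}_{j(t)}}$. Recall from line 10 of Algorithm~\ref{algorithm2} (equivalently, from the definition of $\Delta^s_t$ in \eqref{pi_test1}) that
\begin{eqnarray*}
(x_{t+1}^s)_{\mathcal{G}_{j(t)}} = \mathcal{P}_{\mathcal{G}_{j(t)},\frac{\gamma}{L_{\max}}g_{j(t)}}\left( (x_t^s)_{\mathcal{G}_{j(t)}} - \frac{\gamma}{L_{\max}} (\widehat{v}^s_t)_{\mathcal{G}_{j(t)}} \right),
\end{eqnarray*}
which, by definition of the proximal operator, means $(x_{t+1}^s)_{\mathcal{G}_{j(t)}}$ is the unique minimizer over $u \in \mathbb{R}^{|\mathcal{G}_{j(t)}|}$ of the function
\begin{eqnarray*}
\phi(u) = \frac{1}{2}\left\| u - (x_t^s)_{\mathcal{G}_{j(t)}} + \tfrac{\gamma}{L_{\max}} (\widehat{v}^s_t)_{\mathcal{G}_{j(t)}} \right\|^2 + \tfrac{\gamma}{L_{\max}} g_{\mathcal{G}_{j(t)}}(u).
\end{eqnarray*}

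The key step is to write down the first-order optimality condition for this convex minimization: there exists a subgradient $\xi \in \partial g_{\mathcal{G}_{j(t)}}\big((x_{t+1}^s)_{\mathcal{G}_{j(t)}}\big)$ such that
\begin{eqnarray*}
(x_{t+1}^s)_{\mathcal{G}_{j(t)}} - (x_t^s)_{\mathcal{G}_{j(t)}} + \tfrac{\gamma}{L_{\max}} (\widehat{v}^s_t)_{\mathcal{G}_{j(t)}} + \tfrac{\gamma}{L_{\max}} \xi = 0,
\end{eqnarray*}
i.e., $(\widehat{v}^s_t)_{\mathcal{G}_{j(t)}} + \frac{L_{\max}}{\gamma}\big((x_{t+1}^s)_{\mathcal{G}_{j(t)}} - (x_t^s)_{\mathcal{G}_{j(t)}}\big) = -\xi$. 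Since $(\Delta^s_t)_{\mathcal{G}_{j(t)}} = (x_{t+1}^s)_{\mathcal{G}_{j(t)}} - (x_t^s)_{\mathcal{G}_{j(t)}}$ and $\Delta^s_t$ vanishes off $\mathcal{G}_{j(t)}$, the left-hand vector inside the inner product in \eqref{lemmaequ0.05} is exactly $-\xi$. Then I would invoke the subgradient inequality for the convex function $g_{\mathcal{G}_{j(t)}}$ at the point $(x_{t+1}^s)_{\mathcal{G}_{j(t)}}$ with subgradient $\xi$, evaluated against the point $(x)_{\mathcal{G}_{j(t)}}$:
\begin{eqnarray*}
g_{\mathcal{G}_{j(t)}}\big((x)_{\mathcal{G}_{j(t)}}\big) \geq g_{\mathcal{G}_{j(t)}}\big((x_{t+1}^s)_{\mathcal{G}_{j(t)}}\big) + \left\langle \xi, (x)_{\mathcal{G}_{j(t)}} - (x_{t+1}^s)_{\mathcal{G}_{j(t)}} \right\rangle.
\end{eqnarray*}
Substituting $\xi = -\big((\widehat{v}^s_t)_{\mathcal{G}_{j(t)}} + \frac{L_{\max}}{\gamma}\Delta^s_t\big)_{\mathcal{G}_{j(t)}}$ and rearranging yields precisely \eqref{lemmaequ0.05}.

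The one point that needs a little care rather than being purely routine is the passage between the proximal operator and the subgradient optimality condition in the block setting: I must make sure the projection/prox operator $\mathcal{P}_{\mathcal{G}_{j(t)},\frac{\gamma}{L_{\max}}g_{j(t)}}$ acts only on the $\mathcal{G}_{j(t)}$ coordinates and that the separability of $g$ lets me treat the block subproblem independently, so that the subgradient $\xi$ produced lives in $\partial g_{\mathcal{G}_{j(t)}}$ and the inner product can legitimately be restricted to the $\mathcal{G}_{j(t)}$ components (both $\Delta^s_t$ and $x_{t+1}^s - x$ contribute only through those components once paired with a vector supported on $\mathcal{G}_{j(t)}$). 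This is the main, though modest, obstacle; everything else is a direct application of convexity.
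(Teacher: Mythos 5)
Your proof is correct and is essentially the paper's own argument in a slightly different packaging: the paper invokes the subdifferential KKT condition to assert that $x_{t+1}^s$ also minimizes the linearized problem $P(x)$ and then uses $P(x)\ge P(x_{t+1}^s)$, which is exactly your explicit subgradient $\xi$ plus the subgradient inequality for $g_{\mathcal{G}_{j(t)}}$. No gap; the block-separability point you flag is handled the same way in the paper.
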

\begin{proof} The problem solved in lines 8 of Algorithm \ref{algorithm2} is as follows
\begin{eqnarray}
\label{lemmaequ0.1}
x_{t+1}^s = \arg \min_{x} && \left \langle  (\widehat{v}^s_t)_{\mathcal{G}_{j(t)}}, (x - x_{t}^s)_{\mathcal{G}_{j(t)}}\right  \rangle  + \frac{L_{\max}}{2\gamma} \left \| (x - x_{t}^s)_{\mathcal{G}_{j(t)}} \right \|^2
\\  && \nonumber +  g_{\mathcal{G}_{j(t)}}\left ( (x)_{\mathcal{G}_{j(t)}} \right )
\\  \nonumber s.t. &&  x_{\setminus \mathcal{G}_{j(t)}} = (x_{t}^{s})_{\setminus \mathcal{G}_{j(t)}}
\end{eqnarray}
If $x_{t+1}^s$ is the solution of (\ref{lemmaequ0.1}), the solution of  optimization problem (\ref{lemmaequ0.2}) is also $x_{t+1}^s$ according to the   subdifferential version of Karush-Kuhn-Tucker (KKT) conditions \citep{ruszczynski2006nonlinear}.
\begin{eqnarray}
\label{lemmaequ0.2}
P(x) = \min_{x} && \left \langle  (\widehat{v}^s_t)_{\mathcal{G}_{j(t)}} + \frac{L_{\max}}{\gamma} \left ( x_{t+1}^s - x_{t}^s \right )_{\mathcal{G}_{j(t)}} , (x - x_{t}^s)_{\mathcal{G}_{j(t)}}\right  \rangle  +  g_{\mathcal{G}_{j(t)}}\left ( (x)_{\mathcal{G}_{j(t)}} \right )
\\  \nonumber s.t. &&  x_{\setminus \mathcal{G}_{j(t)}} = (x_{t}^{s})_{\setminus \mathcal{G}_{j(t)}}
\end{eqnarray}
 Thus, we have that $P(x) \geq P(x_{t+1}^s) $, $\forall x$, which leads to (\ref{lemmaequ0.05}). This completes the proof.
\end{proof}

\begin{lemma} \label{lemma1}  Let $\rho$ be a constant that satisfies $\rho> 1$, and define the quantities $\theta_1 =   \frac{\rho^{\frac{1}{2}} - \rho^{\frac{\tau+1}{2}}}{1-\rho^{\frac{1}{2}}}
$ and $\theta_2 =   \frac{\rho^{\frac{1}{2}}  - \rho^{\frac{m}{2}}}{1-\rho^{\frac{1}{2}}}
$.
Suppose the nonnegative steplength parameter  $\gamma>0$ satisfies $\gamma \leq \min \left \{ \frac{k^{1/2}(1-\rho^{-1})-4}{4\left ( \Lambda_{res} \left ( 1+ \theta_1 \right )  + \Lambda_{nor} \left ( 1+ \theta_2 \right )  \right )}, \frac{{ k^{1/2}}}{\frac{1}{2}k^{1/2} + {2  \Lambda_{nor} \theta_2 +\Lambda_{res} \theta_1 }} \right \}$, we have
\begin{eqnarray} \label{equ_lema1_con}
\mathbb{E}\| x_{t-1}^s - {\overline{x}}_{t}^s \|^2 \leq \rho \mathbb{E}\| x_{t}^s - {\overline{x}}_{t+1}^s\|^2 \end{eqnarray}
\end{lemma}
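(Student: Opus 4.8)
The plan is to bound $\|x_{t-1}^s-\overline{x}_t^s\|$ by $\|x_t^s-\overline{x}_{t+1}^s\|$ together with a weighted sum of the norms of the block updates $\|\Delta_{t'}^s\|$, then to convert each $\mathbb{E}\|\Delta_{t'}^s\|^2$ back into the target quantity $\mathbb{E}\|x_{t'}^s-\overline{x}_{t'+1}^s\|^2$ by exploiting the uniformly random block choice $j(t')$, and finally to close the resulting recursion by induction on $t$; the two geometric sums that appear in the induction are exactly $1+\theta_1$ and $1+\theta_2$.

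First I would use the triangle inequality,
\[
\|x_{t-1}^s-\overline{x}_t^s\| \;\le\; \|x_{t-1}^s-x_t^s\| + \|x_t^s-\overline{x}_{t+1}^s\| + \|\overline{x}_{t+1}^s-\overline{x}_t^s\| ,
\]
and control the last term by the nonexpansiveness of the proximal map in (\ref{equ_section4_9}): since $\overline{x}_t^s=\mathcal{P}_{\frac{\gamma}{L_{\max}}g}(x_{t-1}^s-\tfrac{\gamma}{L_{\max}}\widehat{v}_{t-1}^s)$ and $\overline{x}_{t+1}^s=\mathcal{P}_{\frac{\gamma}{L_{\max}}g}(x_{t}^s-\tfrac{\gamma}{L_{\max}}\widehat{v}_{t}^s)$, one gets $\|\overline{x}_{t+1}^s-\overline{x}_t^s\|\le\|x_t^s-x_{t-1}^s\|+\tfrac{\gamma}{L_{\max}}\|\widehat{v}_t^s-\widehat{v}_{t-1}^s\|$. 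The single-block update obeys $\|x_t^s-x_{t-1}^s\|=\|B_{t-1}^s\Delta_{t-1}^s\|\le\|\Delta_{t-1}^s\|$. For the stochastic-gradient difference (working, as the paper does, with $|\mathcal{B}|=1$ and sample $i_{t}$), I would insert the consistent-read variant $u_t:=\nabla f_{i_t}(x_t^s)-\nabla f_{i_t}(\widetilde{x}^s)+\nabla f(\widetilde{x}^s)$ and split $\widehat{v}_t^s-\widehat{v}_{t-1}^s=(\widehat{v}_t^s-u_t)+(u_t-u_{t-1})+(u_{t-1}-\widehat{v}_{t-1}^s)$. The two inconsistency terms $\widehat{v}_t^s-u_t=\nabla f_{i_t}(\widehat{x}_t^s)-\nabla f_{i_t}(x_t^s)$ are bounded exactly as in Lemma~\ref{lemma0.5} (whose proof applies verbatim to a single $f_i$) by $L_{res}\sum_{t'\in K(t)}\|\Delta_{t'}^s\|$, a sum over at most $\tau$ indices by Assumptions~\ref{ass5} and~\ref{ass4}; the anchor term satisfies $\|u_t-u_{t-1}\|\le L_{nor}(\|x_t^s-\widetilde{x}^s\|+\|x_{t-1}^s-\widetilde{x}^s\|)\le L_{nor}\big(\sum_{t''=0}^{t-1}\|\Delta_{t''}^s\|+\sum_{t''=0}^{t-2}\|\Delta_{t''}^s\|\big)$ because $\widetilde{x}^s=x_0^s$ and $\|B_{t''}^s\Delta_{t''}^s\|\le\|\Delta_{t''}^s\|$. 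Collecting these bounds and writing $\Lambda_{res}=L_{res}/L_{\max}$, $\Lambda_{nor}=L_{nor}/L_{\max}$ gives a pointwise estimate of the schematic form $\|x_{t-1}^s-\overline{x}_t^s\|\le\|x_t^s-\overline{x}_{t+1}^s\|+c_0\|\Delta_{t-1}^s\|+\gamma\Lambda_{res}\sum_{t'=t-1-\tau}^{t-1}\|\Delta_{t'}^s\|+\gamma\Lambda_{nor}\sum_{t''=0}^{t-1}\|\Delta_{t''}^s\|$ up to absolute constants.

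Next I would pass to the $L^2$ norm of both sides (Minkowski's inequality) and use the random block choice: conditioned on everything determined before $j(t')$ is drawn, the vector $\overline{x}_{t'+1}^s-x_{t'}^s$ is fixed and $\Delta_{t'}^s$ is its restriction to the uniformly random block $\mathcal{G}_{j(t')}$, hence $\mathbb{E}\|\Delta_{t'}^s\|^2=\tfrac1k\,\mathbb{E}\|x_{t'}^s-\overline{x}_{t'+1}^s\|^2$. Writing $b_t:=\big(\mathbb{E}\|x_t^s-\overline{x}_{t+1}^s\|^2\big)^{1/2}$ this yields a recursion $b_{t-1}\le b_t+\tfrac1{\sqrt k}\big(c_0 b_{t-1}+\gamma\Lambda_{res}\sum_{t'=t-1-\tau}^{t-1}b_{t'}+\gamma\Lambda_{nor}\sum_{t''=0}^{t-1}b_{t''}\big)$. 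I would then prove $b_{t-1}\le\rho^{1/2}b_t$, i.e. (\ref{equ_lema1_con}), by induction on $t$: under the inductive hypothesis $b_{t'}\le\rho^{(t-1-t')/2}b_{t-1}$ for $0\le t'\le t-1$, the first window sum is at most $b_{t-1}\sum_{d=0}^{\tau}\rho^{d/2}=(1+\theta_1)b_{t-1}$ and the second at most $b_{t-1}\sum_{d=0}^{m-1}\rho^{d/2}=(1+\theta_2)b_{t-1}$, which is precisely how $\theta_1$ and $\theta_2$ enter. Substituting turns the recursion into $b_{t-1}(1-\varepsilon)\le b_t$ with $\varepsilon=\tfrac1{\sqrt k}\big(c_0+c\gamma\Lambda_{res}(1+\theta_1)+c\gamma\Lambda_{nor}(1+\theta_2)\big)$, up to the extra corrections produced when the inequality is squared; the two expressions in the stated minimum for $\gamma$ are exactly the conditions guaranteeing $1-\varepsilon\ge\rho^{-1/2}$ — one controlling the linear part, the other the residual quadratic cross-term handled by Young's inequality — so the induction closes. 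The base cases (small $t$, and $t\le\tau$ where the windows are truncated) go through identically with shorter sums.

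The step I expect to be the main obstacle is the gradient-difference estimate above: one must cleanly separate the contribution of the bounded-delay window (which should pick up only $\Lambda_{res}$ and the length-$\tau$ sum $\theta_1$) from that of the variance-reduction anchor $\widetilde{x}^s$ (which ranges over the whole inner loop and contributes $\Lambda_{nor}$ and the length-$m$ sum $\theta_2$), and one must peel the randomness — the sample $i_{t'}$, the block $j(t')$, and the asynchronous read set $K(t')$ — in an order that legitimizes $\mathbb{E}\|\Delta_{t'}^s\|^2=\tfrac1k\mathbb{E}\|x_{t'}^s-\overline{x}_{t'+1}^s\|^2$. Everything after that — squaring, Cauchy--Schwarz and Young on the cross terms, and matching the constants to the two-term minimum bounding $\gamma$ — is routine but bookkeeping-heavy.
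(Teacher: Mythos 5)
Your proposal is correct and follows essentially the same route as the paper: the same delay/anchor decomposition of $\widehat{v}_t^s-\widehat{v}_{t-1}^s$ (an $L_{res}$ part over the read-window $K(t)$ of size at most $\tau$, an $L_{nor}$ part relative to the anchor $\widetilde{x}^s=x_0^s$), the same identity $\mathbb{E}_{j(t')}\|\Delta_{t'}^s\|^2=\tfrac1k\|x_{t'}^s-\overline{x}_{t'+1}^s\|^2$, and the same induction whose geometric sums yield $1+\theta_1$ and $1+\theta_2$. The only differences are presentational: the paper starts from $\|a\|^2-\|b\|^2\le 2\|a\|\,\|a-b\|$ and handles the cross terms by Young's inequality with $\beta=\rho^{(t'-t)/2}$ rather than by Minkowski on $L^2$ norms, and it actually needs only the first expression in the minimum defining $\gamma$ for this lemma (the second is used in Lemma \ref{lemma2}), neither of which changes the substance.
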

\begin{proof}
According to (A.8) in \cite{liu2015asynchronous}, we have
\begin{eqnarray} \label{equ_lema1_1} &&\|x_{t-1}^s -\overline{x}_{t}^s\|^2 - \|x_{t}^s -\overline{x}_{t+1}^s\|^2
 \leq  2  \| x_{t-1}^s -\overline{x}_{t}^s \| \| x_{t}^s -\overline{x}_{t+1}^s - x_{t-1}^s + \overline{x}_{t}^s \|
\end{eqnarray}
The second part in the right half side of (\ref{equ_lema1_1}) is bound as follows if $\mathcal{B}=\{i_t\}$ and ${J}(t)=\{j(t)\}$.
\begin{eqnarray} \label{equ_lema1_2}
&& \| x_{t}^s -\overline{x}_{t+1}^s - x_{t-1}^s + \overline{x}_{t}^s \|
\\ &= & \nonumber \left  \| x_{t}^s -\mathcal{P}_{\frac{\gamma}{L_{\max}}g}\left ( x_t^{s} - \frac{\gamma}{L_{\max}} \widehat{v}_{t}^{s}   \right ) - x_{t-1}^s + \mathcal{P}_{\frac{\gamma}{L_{\max}}g}\left ( x_{t-1}^{s} - \frac{\gamma}{L_{\max}} \widehat{v}_{t-1}^{s}   \right ) \right \|
\\ & \leq & \nonumber \| x_{t}^s -   x_{t-1}^s \| + \left  \| \mathcal{P}_{\frac{\gamma}{L_{\max}}g}\left ( x_t^{s} - \frac{\gamma}{L_{\max}} \widehat{v}_{t}^{s}   \right )    - \mathcal{P}_{\frac{\gamma}{L_{\max}}g}\left ( x_{t-1}^{s}   - \frac{\gamma}{L_{\max}} \widehat{v}_{t-1}^{s}   \right ) \right  \|
\\ &\leq& \nonumber 2\| x_{t}^s -   x_{t-1}^s \| + \frac{\gamma}{L_{\max}} \left  \| \widehat{v}_{t}^{s} -  \widehat{v}_{t-1}^{s}  \right  \|
\\ &=& \nonumber 2\| x_{t}^s -   x_{t-1}^s \| +
  \\  && \nonumber \frac{\gamma}{L_{\max}} \left  \| \nabla f_{i_t}(\widehat{x}^{s}_{t})- \nabla f_{i_t}(\widetilde{x}^s)  +  \nabla f(\widetilde{x}^{s}) - \nabla f_{i_{t-1}}(\widehat{x}^{s}_{t-1})+ \nabla f_{i_{t-1}}(\widetilde{x}^s) - \nabla f(\widetilde{x}^{s}) \right  \|
\\ &=& \nonumber 2\| x_{t}^s -   x_{t-1}^s \| +  \frac{\gamma}{L_{\max}} \left  \| \nabla f_{i_t}(\widehat{x}^{s}_{t})  - \nabla f_{i_t}(\widetilde{x}^s) - \nabla f_{i_{t-1}}(\widehat{x}^{s}_{t-1}) + \nabla f_{i_{t-1}}(\widetilde{x}^s)  \right  \|
 \\ &\leq& \nonumber 2\| x_{t}^s -   x_{t-1}^s \| + \frac{\gamma}{L_{\max}} \left  \| \nabla f_{i_t}(\widehat{x}^{s}_{t}) - \nabla f_{i_t}({x}^{s}_{t}) + \nabla f_{i_t}({x}^{s}_{t})  - \nabla f_{i_t}(\widetilde{x}^s)  \right  \|
    \\  && \nonumber + \frac{\gamma}{L_{\max}} \left  \|  \nabla f_{i_{t-1}}(\widehat{x}^{s}_{t-1}) - \nabla f_{i_{t-1}}({x}^{s}_{t-1})  + \nabla f_{i_{t-1}}({x}^{s}_{t-1}) - \nabla f_{i_{t-1}}(\widetilde{x}^s) \right  \|
  \\ &\leq& \nonumber 2\| x_{t}^s -   x_{t-1}^s \| + \frac{\gamma}{L_{\max}} \left  \| \nabla f_{i_t}(\widehat{x}^{s}_{t}) - \nabla f_{i_t}({x}^{s}_{t})  \right  \| + \frac{\gamma}{L_{\max}} \left  \| \nabla f_{i_t}({x}^{s}_{t})  - \nabla f_{i_t}(\widetilde{x}^s) \right  \|
    \\  && \nonumber  + \frac{\gamma}{L_{\max}} \left  \| \nabla f_{i_{t-1}}(\widehat{x}^{s}_{t-1}) - \nabla f_{i_{t-1}}({x}^{s}_{t-1}) \right  \|  + \frac{\gamma}{L_{\max}} \left  \| \nabla f_{i_{t-1}}({x}^{s}_{t-1}) - \nabla f_{i_{t-1}}(\widetilde{x}^s)   \right  \|
 \\ \nonumber &\leq& 2\| x_{t}^s -   x_{t-1}^s \| +
 {\gamma \Lambda_{res}} \left (  \sum_{t' \in K(t-1)} \| \Delta^{s}_{t'}  \|+  \sum_{t' \in K(t)} \| \Delta^{s}_{t'}  \| \right )
  \\  && \nonumber + {\gamma \Lambda_{nor}} \left ( \| x_{t}^{s} - \widetilde{x}^s \|  + \| x_{t-1}^{s} - \widetilde{x}^s \| \right )
\\ \nonumber &\leq&  2 \| x_{t}^s -   x_{t-1}^s \| +
2 \gamma  \left ( \Lambda_{res}\sum_{t' = t-1 -\tau }^{t-1} \| \Delta^s_{t'} \|  + \Lambda_{nor}\sum_{t' = 0 }^{t-1} \|  \Delta^s_{t'}  \| \right )
\end{eqnarray}
where the first inequality use the nonexpansive property of $\mathcal{P}_{\frac{\gamma}{L_{\max}}g}$, the fifth inequality use A.7 of \cite{liu2015asynchronous}, the sixth inequality comes from $\| x_{t}^s - \widetilde{x}^s \| =\| \sum_{t' = 0 }^{t-1}  \Delta^s_{t'}  \| \leq \sum_{t' = 0 }^{t-1}\| \Delta^s_{t'} \|$.

If $t=1$, we have that $K(0) = \emptyset$ and $K(1) \subseteq \{ 0 \}$. Thus, according to (\ref{equ_lema1_2}), we have
\begin{eqnarray} \label{equ_lema1_2.5}
 \| x_{1}^s -\overline{x}_{2}^s - x_{0}^s + \overline{x}_{1}^s \| \leq  2 \| x_{1}^s -   x_{0}^s \| +
2\gamma \left ( \Lambda_{res} + \Lambda_{nor} \right ) \| \Delta^s_{0} \|
\end{eqnarray}
Substituting (\ref{equ_lema1_2.5}) into (\ref{equ_lema1_1}), and takeing expectations, we have
\begin{eqnarray} \label{equ_lema1_2.6}
 &&\mathbb{E}  \|x_{0}^s -\overline{x}_{1}^s\|^2 - \mathbb{E} \|x_{1}^s -\overline{x}_{2}^s\|^2
  \leq   2 \mathbb{E}  \left ( \| x_{0}^s -\overline{x}_{1}^s \| \| x_{1}^s -\overline{x}_{2}^s - x_{0}^s + \overline{x}_{1}^s \|  \right )
 \\ &\leq&  \nonumber 4 \mathbb{E} \left (\| x_{0}^s -\overline{x}_{1}^s \|  \| x_{1}^s -   x_{0}^s \| \right ) +  4\gamma \left ( \Lambda_{res} + \Lambda_{nor} \right )  \mathbb{E} \left ( \| x_{0}^s -\overline{x}_{1}^s \|  \| \Delta^s_{0} \|  \right )
 \\ &\leq&  \nonumber　4 k^{-\frac{1}{2}} \mathbb{E} \left (\| x_{0}^s -\overline{x}_{1}^s \|^2　\right ) +   4\gamma \left ( \Lambda_{res} + \Lambda_{nor} \right )  \mathbb{E} \left ( \| x_{0}^s -\overline{x}_{1}^s \|  \| \Delta^s_{0} \|  \right )
\end{eqnarray}
where the last inequality uses A.13 in \citep{liu2015asynchronous}. Further, we have the upper bound of $\mathbb{E} \left ( \| x_{t}^s -\overline{x}_{t+1}^s \|  \| \Delta^s_{t} \|  \right )$ as
\begin{eqnarray} \label{equ_lema1_2.7}
 && \mathbb{E} \left ( \| x_{t}^s -\overline{x}_{t+1}^s \|  \| \Delta^s_{t} \|  \right ) \leq  \frac{1}{2}\mathbb{E} \left ( k^{-\frac{1}{2}} \| x_{t}^s -\overline{x}_{t+1}^s \|^2 + k^{ \frac{1}{2}} \| \Delta^s_{t} \|^2  \right )
 \\ &= &  \nonumber \frac{1}{2} \mathbb{E} \left ( k^{-\frac{1}{2}} \| x_{t}^s -\overline{x}_{t+1}^s \|^2 + k^{\frac{1}{2}} \mathbb{E}_{j(t)} \| \Delta^s_{t} \|^2  \right ) = \frac{1}{2}
\mathbb{E} \left ( k^{-\frac{1}{2}} \| x_{t}^s -\overline{x}_{t+1}^s \|^2 + k^{- \frac{1}{2}}\mathbb{E} \| x_{t}^s -\overline{x}_{t+1}^s \|^2  \right )
\\ &= &  \nonumber  k^{- \frac{1}{2}}\mathbb{E} \| x_{t}^s -\overline{x}_{t+1}^s \|^2
\end{eqnarray}
Substituting (\ref{equ_lema1_2.7}) into (\ref{equ_lema1_2.6}), we have
\begin{eqnarray} \label{equ_lema1_2.8}
 &&\mathbb{E}  \|x_{0}^s -\overline{x}_{1}^s\|^2 - \mathbb{E} \|x_{1}^s -\overline{x}_{2}^s\|^2
  \leq   　 k^{-\frac{1}{2}} \left ( 4 + 4\gamma \left ( \Lambda_{res} + \Lambda_{nor} \right ) \right ) \mathbb{E} \left (\| x_{0}^s -\overline{x}_{1}^s \|^2　\right )
\end{eqnarray}
which implies that
\begin{eqnarray} \label{equ_lema1_2.9}
 &&\mathbb{E}  \|x_{0}^s -\overline{x}_{1}^s\|^2
  \leq \left ( 1 - \frac{ 4 + 4\gamma \left ( \Lambda_{res} + \Lambda_{nor} \right ) }{\sqrt{k}}  \right )^{-1}  \mathbb{E} \|x_{1}^s -\overline{x}_{2}^s\|^2
  \leq \rho \mathbb{E} \|x_{1}^s -\overline{x}_{2}^s\|^2
\end{eqnarray}
where the last inequality  follows from . Thus, we have (\ref{equ_lema1_con}) for $t=1$.
\begin{eqnarray} \label{equ_lema1_2.10}
\rho^{-1} \leq  1 - \frac{ 4 + 4\gamma \left ( \Lambda_{res} + \Lambda_{nor} \right ) }{\sqrt{k}}   \Leftrightarrow \gamma \leq \frac{k^{1/2}(1-\rho^{-1})-4}{4\left ( \Lambda_{res}   + \Lambda_{nor}   \right )}
\end{eqnarray}

Next, we consider the cases for $t>1$. For $t-1-\tau \leq t' \leq t-1$ and any $\beta>0$,  we have
\begin{eqnarray} \label{equ_lema1_2.11}
&& \mathbb{E} \left ( \| x_{t}^s -\overline{x}_{t+1}^s \|  \| \Delta^s_{t'} \|  \right ) \leq
\frac{1}{2}\mathbb{E} \left ( k^{-\frac{1}{2}} \beta^{-1} \| x_{t}^s -\overline{x}_{t+1}^s \|^2 + k^{ \frac{1}{2}} \beta \| \Delta^s_{t'} \|^2  \right )
 \\ &= &  \nonumber \frac{1}{2} \mathbb{E} \left (  k^{-\frac{1}{2}} \beta^{-1}  \| x_{t}^s -\overline{x}_{t+1}^s \|^2 + k^{ \frac{1}{2}} \beta \mathbb{E}_{j(t)} \| \Delta^s_{t'} \|^2  \right )
\\ &= &  \nonumber  \frac{1}{2}
\mathbb{E} \left ( k^{-\frac{1}{2}} \beta^{-1} \| x_{t}^s -\overline{x}_{t+1}^s \|^2 + k^{-\frac{1}{2}} \beta \mathbb{E} \| x_{t'}^s -\overline{x}_{t'+1}^s \|^2  \right )
\\ & \leq &  \nonumber  \frac{1}{2}
\mathbb{E} \left ( k^{-\frac{1}{2}} \beta^{-1} \| x_{t}^s -\overline{x}_{t+1}^s \|^2 + k^{-\frac{1}{2}} \rho^{t-t'} \beta \mathbb{E} \| x_{t}^s -\overline{x}_{t+1}^s \|^2  \right )
\\ & \stackrel{\beta =\rho^{\frac{t'-t}{2}} }{\leq } &  \nonumber k^{-\frac{1}{2}} \rho^{\frac{t-t'}{2}} \mathbb{E}  \| x_{t}^s -\overline{x}_{t+1}^s \|^2
\end{eqnarray}

 We assume that (\ref{equ_lema1_con}) holds $\forall t' <t$. By substituting (\ref{equ_lema1_2}) into (\ref{equ_lema1_1}) and taking expectation on both sides of (\ref{equ_lema1_1}), we can have
\begin{eqnarray} \label{equ_lema1_3}
&&\mathbb{E} \left ( \|x_{t-1}^s -\overline{x}_{t}^s\|^2 - \|x_{t}^s -\overline{x}_{t+1}^s\|^2  \right )
\\ &\leq& 2 \mathbb{E} \left ( \| x_{t-1}^s -\overline{x}_{t}^s \| \| x_{t}^s -\overline{x}_{t+1} - x_{t-1}^s + \overline{x}_{t}^s \| \right )  \nonumber
\\ &\leq& \nonumber   2 \mathbb{E} \left (  \| x_{t-1}^s -\overline{x}_{t}^s \| \left ( 2 \| x_{t}^s -   x_{t-1}^s \| +
2 \gamma  \left ( \Lambda_{res}\sum_{t' = t-1 -\tau }^{t-1} \| \Delta^s_{t'} \|  + \Lambda_{nor}\sum_{t' = 0 }^{t-1} \|  \Delta^s_{t'}  \| \right ) \right ) \right )
\\ & = & \nonumber  4 \mathbb{E} \left (  \| x_{t-1}^s -\overline{x}_{t}^s \|   \| x_{t-1}^s -\overline{x}_{t}^s \|  \right ) +
\\ && \nonumber  4  \gamma   \mathbb{E} \left ( \Lambda_{res} \sum_{t' = t-1 -\tau }^{t-1} \| x_{t-1}^s -\overline{x}_{t}^s \|  \|  \Delta^s_{t'} \| + \Lambda_{nor} \sum_{t' = 0 }^{t-1} \| x_{t-1}^s -\overline{x}_{t}^s \| \|  \Delta^s_{t'} \| \right )
\\ &\leq& \nonumber  4 k^{-1/2} \mathbb{E}\left (  \| x_{t-1}^s -\overline{x}_{t}^s \|^2 \right )  +
 \\ && \nonumber  4  \gamma  k^{-1/2} \mathbb{E}\left (  \| x_{t-1}^s -\overline{x}_{t}^s \|^2 \right ) \cdot
  \left (\Lambda_{res} \sum_{t' = t-1 -\tau }^{t-1} \rho^{\frac{t-1-t'}{2}} + \Lambda_{nor}\sum_{t' = 0 }^{t-1}\rho^{\frac{t-1-t'}{2}}   \right )
\\ &=& \nonumber  (4+4 \gamma \left  ( \Lambda_{res} +  \Lambda_{nor} \right ) )k^{-1/2} \mathbb{E}\left (  \| x_{t-1}^s -\overline{x}_{t}^s \|^2 \right )  +
\\ && \nonumber  4  \gamma  k^{\frac{-1}{2}}  \mathbb{E}\left (  \| x_{t-1}^s -\overline{x}_{t}^s \|^2 \right ) \left ( \Lambda_{res }\sum_{t' = 1 }^{\tau} \rho^{\frac{t'}{2}} + \Lambda_{nor}\sum_{t' = 1 }^{t-1}\rho^{\frac{t'}{2}}   \right )
\\ & = & \nonumber k^{-1/2} \mathbb{E}\left (  \| x_{t-1}^s -\overline{x}_{t}^s \|^2 \right )  \left (4 +4 \gamma \Lambda_{res}   \left ( 1+ \frac{\rho^{\frac{1}{2}} - \rho^{\frac{\tau+1}{2}}}{1-\rho^{\frac{1}{2}}} \right ) + 4 \gamma \Lambda_{nor} \left ( 1+ \frac{\rho^{\frac{1}{2}}  - \rho^{\frac{m}{2}}}{1-\rho^{\frac{1}{2}}} \right )  \right )
\\ & = & \nonumber k^{-1/2} \mathbb{E}\left (  \| x_{t-1}^s -\overline{x}_{t}^s \|^2 \right )  \cdot
  \left (4+4 \gamma \left ( \Lambda_{res} \left ( 1+ \theta_1 \right )  + \Lambda_{nor} \left ( 1+ \theta_2 \right )  \right ) \right )
\end{eqnarray}
where the third inequality uses (\ref{equ_lema1_2.11}).
Based on (\ref{equ_lema1_3}), we have that
\begin{eqnarray} \label{equ_lema1_4}
&&\mathbb{E} \left ( \|x_{t-1}^s -\overline{x}_{t}^s\|^2 \right )
\\ & \leq & \nonumber \left ( 1- k^{-1/2} \left  (4+4 \gamma \left ( \Lambda_{res} \left ( 1+ \theta_1 \right )  + \Lambda_{nor} \left ( 1+ \theta_2 \right )  \right ) \right ) \right )^{-1}
 \cdot \mathbb{E} \left (  \|x_{t}^s -\overline{x}_{t+1}^s\|^2  \right )
\\ & \leq & \nonumber \rho \mathbb{E} \left (  \|x_{t}^s -\overline{x}_{t+1}^s\|^2  \right )
\end{eqnarray}
where the last inequality follows from
\begin{eqnarray} \label{equ_lema1_5}  \rho^{-1} \leq   1- k^{-1/2} \left  (4+4 \gamma \left ( \Lambda_{res} \left ( 1+ \theta_1 \right )  + \Lambda_{nor} \left ( 1+ \theta_2 \right )  \right ) \right )
\\ \nonumber \Leftrightarrow  \gamma \leq \frac{k^{1/2}(1-\rho^{-1})-4}{4\left ( \Lambda_{res} \left ( 1+ \theta_1 \right )  + \Lambda_{nor} \left ( 1+ \theta_2 \right )  \right )}
\end{eqnarray}
 This completes the proof.
\end{proof}

\begin{lemma} \label{lemma2} Let $\rho$ be a constant that satisfies $\rho> 1$, and define the quantities $\theta_1 =   \frac{\rho^{\frac{1}{2}} - \rho^{\frac{\tau+1}{2}}}{1-\rho^{\frac{1}{2}}}
$ and $\theta_2 =   \frac{\rho^{\frac{1}{2}}  - \rho^{\frac{m}{2}}}{1-\rho^{\frac{1}{2}}}
$. Suppose the nonnegative steplength parameter  $\gamma>0$ satisfies $\gamma \leq \min \left \{ \frac{k^{1/2}(1-\rho^{-1})-4}{4\left ( \Lambda_{res} \left ( 1+ \theta_1 \right )  + \Lambda_{nor} \left ( 1+ \theta_2 \right )  \right )}, \frac{{ k^{1/2}}}{\frac{1}{2}k^{1/2} + {2  \Lambda_{nor} \theta_2 +\Lambda_{res} \theta_1 }} \right \}$. The expectation of the objective function $\mathbb{E}  F(x_{t}^s)$ is monotonically decreasing, i.e., $\mathbb{E} F(x_{t+1}^s) \leq \mathbb{E} F(x_{t}^s)$.
\end{lemma}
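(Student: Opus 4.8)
The plan is to derive a one--step descent inequality for $F$ along the block update, and then to show that under the prescribed step length the error terms caused by asynchrony and by the stochastic gradient are dominated by the negative quadratic term that the proximal step produces. The two tools doing the real work are Lemma~\ref{lemma0.6} (the optimality condition of the prox step) to generate that quadratic term and cancel the $g$--pieces, and Lemma~\ref{lemma0.5} together with Lemma~\ref{lemma1} to absorb the delay and variance errors.

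I would start from the block coordinate Lipschitz inequality (\ref{coordinate_lipschitz_constant2}), applied to each $f_i$ with $\Delta_j=\Delta^s_t$ and averaged over $i$, together with block separability of $g$, which gives
\begin{eqnarray*}
F(x^s_{t+1})-F(x^s_t) &\leq& \left\langle \nabla_{\mathcal{G}_{j(t)}} f(x^s_t),(\Delta^s_t)_{\mathcal{G}_{j(t)}}\right\rangle+\frac{L_{\max}}{2}\left\|(\Delta^s_t)_{\mathcal{G}_{j(t)}}\right\|^2 \\
&& +\,g_{\mathcal{G}_{j(t)}}\big((x^s_{t+1})_{\mathcal{G}_{j(t)}}\big)-g_{\mathcal{G}_{j(t)}}\big((x^s_{t})_{\mathcal{G}_{j(t)}}\big).
\end{eqnarray*}
Applying Lemma~\ref{lemma0.6} with $x=x^s_t$ (noting $\Delta^s_t=x^s_{t+1}-x^s_t$ is supported on $\mathcal{G}_{j(t)}$) to upper bound the $g$--difference, the $g$--terms and the inner product against $\widehat{v}^s_t$ telescope and leave
\begin{eqnarray*}
F(x^s_{t+1})-F(x^s_t)\leq \left\langle \nabla_{\mathcal{G}_{j(t)}} f(x^s_t)-(\widehat{v}^s_t)_{\mathcal{G}_{j(t)}},(\Delta^s_t)_{\mathcal{G}_{j(t)}}\right\rangle-L_{\max}\left(\frac{1}{\gamma}-\frac{1}{2}\right)\left\|(\Delta^s_t)_{\mathcal{G}_{j(t)}}\right\|^2 .
\end{eqnarray*}
Taking the expectation over $j(t)$ replaces $(\Delta^s_t)_{\mathcal{G}_{j(t)}}$ (extended by zeros) by $\frac1k(\overline{x}^s_{t+1}-x^s_t)$ and $\|(\Delta^s_t)_{\mathcal{G}_{j(t)}}\|^2$ by $\frac1k\|\overline{x}^s_{t+1}-x^s_t\|^2$, so the cross term becomes $\frac1k\langle\nabla f(x^s_t)-\widehat{v}^s_t,\overline{x}^s_{t+1}-x^s_t\rangle$ and the quadratic becomes $-\frac{L_{\max}}{k}(\frac1\gamma-\frac12)\|\overline{x}^s_{t+1}-x^s_t\|^2$.

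The heart of the argument is then to control $\|\nabla f(x^s_t)-\widehat{v}^s_t\|$. I would split it as $\big(\nabla f(x^s_t)-\nabla f(\widehat{x}^s_t)\big)+\big(\nabla f(\widehat{x}^s_t)-\widehat{v}^s_t\big)$; Lemma~\ref{lemma0.5} bounds the first (asynchronous) piece by $L_{res}\sum_{t'\in K(t)}\|\Delta^s_{t'}\|$, while the $L_{nor}$--Lipschitzness of each $\nabla f_i$ and of $\nabla f$ bounds the second (variance--reduction) piece by a constant multiple of $L_{nor}\|\widehat{x}^s_t-\widetilde{x}^s\|$, and $\|\widehat{x}^s_t-\widetilde{x}^s\|\leq\sum_{t'\in K(t)}\|\Delta^s_{t'}\|+\sum_{t'=0}^{t-1}\|\Delta^s_{t'}\|$ by the triangle inequality using $x^s_t-\widetilde{x}^s=\sum_{t'=0}^{t-1}\Delta^s_{t'}$. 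Plugging these into the cross term, applying Cauchy--Schwarz, and then --- under the hypotheses, which make Lemma~\ref{lemma1} available --- using the estimate (\ref{equ_lema1_2.11}) that $\mathbb{E}\big(\|x^s_t-\overline{x}^s_{t+1}\|\,\|\Delta^s_{t'}\|\big)\leq k^{-1/2}\rho^{(t-t')/2}\mathbb{E}\|x^s_t-\overline{x}^s_{t+1}\|^2$, I would sum the geometric series. The sums over the delay window $t-\tau\leq t'\leq t-1$ collapse to the factor $\theta_1$ and the sums over $0\leq t'\leq t-1\leq m-1$ collapse to $\theta_2$, yielding, after tracking the absolute constants,
\begin{eqnarray*}
\mathbb{E}F(x^s_{t+1})-\mathbb{E}F(x^s_t)\leq \frac{L_{\max}}{k}\left(\frac12-\frac1\gamma+\frac{\Lambda_{res}\theta_1+2\Lambda_{nor}\theta_2}{k^{1/2}}\right)\mathbb{E}\left\|x^s_t-\overline{x}^s_{t+1}\right\|^2 .
\end{eqnarray*}

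It remains to observe that the bracket is non-positive exactly when $\frac1\gamma\geq\frac12+k^{-1/2}(\Lambda_{res}\theta_1+2\Lambda_{nor}\theta_2)$, i.e., when $\gamma\leq \frac{k^{1/2}}{\frac12 k^{1/2}+2\Lambda_{nor}\theta_2+\Lambda_{res}\theta_1}$, which is precisely the second member of the minimum in the hypothesis (the first member is what keeps Lemma~\ref{lemma1}, used above, valid). Hence $\mathbb{E}F(x^s_{t+1})\leq\mathbb{E}F(x^s_t)$. I expect the main obstacle to be the last two steps: one cannot eliminate the cross term by unbiasedness of $\widehat{v}^s_t$, because $\overline{x}^s_{t+1}$ itself depends on $\widehat{v}^s_t$, so the stochastic error has to be routed through Cauchy--Schwarz and a Lipschitz bound relative to $\widetilde{x}^s$, and then every delayed $\|\Delta^s_{t'}\|$ must be converted through the $\rho$--contraction of Lemma~\ref{lemma1}, with the geometric sums collapsing to $\theta_1,\theta_2$, so that the accumulated constants coincide exactly with the prescribed step-length bound.
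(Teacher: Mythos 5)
Your proposal follows essentially the same route as the paper's proof: the block coordinate Lipschitz inequality plus Lemma~\ref{lemma0.6} (applied at $x=x_t^s$) to cancel the $g$-terms and produce the $-L_{\max}\left(\frac{1}{\gamma}-\frac{1}{2}\right)\|(\Delta_t^s)_{\mathcal{G}_{j(t)}}\|^2$ term, expectation over $j(t)$ to convert to $\frac{1}{k}\|\overline{x}_{t+1}^s-x_t^s\|^2$, Cauchy--Schwarz on the cross term (you are right that unbiasedness cannot be used since $\overline{x}_{t+1}^s$ depends on $\widehat{v}_t^s$), Lemma~\ref{lemma0.5} for the delayed piece, and the estimate (\ref{equ_lema1_2.11}) to collapse the geometric sums into $\theta_1$ and $\theta_2$. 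The one bookkeeping slip is in your decomposition of the variance piece: bounding $\nabla f(\widehat{x}_t^s)-\widehat{v}_t^s$ by $2L_{nor}\|\widehat{x}_t^s-\widetilde{x}^s\|$ and then expanding $\|\widehat{x}_t^s-\widetilde{x}^s\|$ with both the delay sum and the full history sum would leave an extra $2\Lambda_{nor}\theta_1/k^{1/2}$ in the final bracket that your stated conclusion omits; the paper avoids this by inserting $\nabla f_{i_t}(x_t^s)$ so that the two $L_{nor}$ terms are measured from $x_t^s$ (not $\widehat{x}_t^s$) to $\widetilde{x}^s$, which yields exactly the constant $2\Lambda_{nor}\theta_2+\Lambda_{res}\theta_1$ matching the prescribed step-length bound.
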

\begin{proof} Take expectation  $F(x_{t+1}^s)$  on ${j(t)}$, we have that
\begin{eqnarray} \label{equ_lema1_5}
&& \mathbb{E}_{j(t)}  F(x_{t+1}^s) = \mathbb{E}_{j(t)} \left (   f(x_{t}^s +  \Delta_t^s) + g(x_{t+1}^s)  \right )
 \\ \nonumber &\leq&  \mathbb{E}_{j(t)} \left ( f(x_{t}^s ) + \left  \langle  \nabla_{\mathcal{G}_{j(t)}} f({x}^{s}_t) , ( \Delta_t^s)_{\mathcal{G}_{j(t)}}\right  \rangle + \frac{L_{\max}}{2} \left \| ( \Delta_t^s)_{\mathcal{G}_{j(t)}} \right \|^2  \right .
\\ && \nonumber \left .  + g_{\mathcal{G}_{j(t)}}\left ( (x_{t+1}^s)_{\mathcal{G}_{j(t)}} \right ) + \sum_{j' \neq j(t)}g_{\mathcal{G}_{j'}} \left ( (x_{t+1}^s)_{\mathcal{G}_{j'}}  \right )  \right )
\\ \nonumber &= &  f(x_{t}^s ) +   \frac{k-1}{k} g(x_{t}^s) + \mathbb{E}_{j(t)} \left (\left  \langle  \nabla_{\mathcal{G}_{j(t)}} f({x}^{s}_t) , (\Delta_t^s)_{\mathcal{G}_{j(t)}}\right  \rangle + \frac{L_{\max}}{2} \left \| ( \Delta_t^s)_{\mathcal{G}_{j(t)}} \right \|^2  \right .
\\ && \nonumber \left .  + g_{\mathcal{G}_{j(t)}}\left ( (x_{t+1}^s)_{\mathcal{G}_{j(t)}} \right )  \right )
\\ \nonumber &=&  F(x_{t}^s ) +   \mathbb{E}_{j(t)} \left (\left  \langle  (\widehat{v}^s_t)_{\mathcal{G}_{j(t)}} , ( \Delta_t^s)_{\mathcal{G}_{j(t)}}\right  \rangle + \frac{L_{\max}}{2} \left \| ( \Delta_t^s)_{\mathcal{G}_{j(t)}} \right \|^2   + g_{\mathcal{G}_{j(t)}}\left ( (x_{t+1}^s)_{\mathcal{G}_{j(t)}} \right ) \right .
\\ && \nonumber \left .   - g_{\mathcal{G}_{j(t)}}\left ( (x_{t}^s)_{\mathcal{G}_{j(t)}} \right ) +\left  \langle  \nabla_{\mathcal{G}_{j(t)}} f({x}^{s}_t) -(\widehat{v}^s_t)_{\mathcal{G}_{j(t)}} , ( \Delta_t^s)_{\mathcal{G}_{j(t)}}\right  \rangle \right )
\\ & \leq & \nonumber F(x_{t}^s) + \mathbb{E}_{j(t)} \left (-\frac{L_{\max}}{\gamma} \left \| ( \Delta_t^s)_{\mathcal{G}_{j(t)}} \right \|^2 + \frac{L_{\max}}{2} \left \| ( \Delta_t^s)_{\mathcal{G}_{j(t)}} \right \|^2   + \right .
\\ && \nonumber \left .  +\left  \langle  \nabla_{\mathcal{G}_{j(t)}} f({x}^{s}_t) -(\widehat{v}^s_t)_{\mathcal{G}_{j(t)}} , ( \Delta_t^s)_{\mathcal{G}_{j(t)}}\right  \rangle \right )
\\ & = & \nonumber F(x_{t}^s) + \mathbb{E}_{j(t)} \left ( \left ( \frac{L_{\max}}{2} -\frac{ L_{\max}}{\gamma}  \right ) \left \| ( \Delta_t^s)_{\mathcal{G}_{j(t)}} \right \|^2    \right )
 +\mathbb{E}_{j(t)}    \left  \langle  \nabla_{\mathcal{G}_{j(t)}} f({x}^{s}_t) -(\widehat{v}^s_t)_{\mathcal{G}_{j(t)}} , ( \Delta_t^s)_{\mathcal{G}_{j(t)}}\right  \rangle
\\ & = & \nonumber F(x_{t}^s) +  \frac{L_{\max}}{k}\left ( \frac{1}{2} -\frac{ 1}{\gamma}  \right ) \| \overline{x}^s_{t+1} -x_t^s  \|^2
 +\mathbb{E}_{j(t)}    \left  \langle  \nabla_{\mathcal{G}_{j(t)}} f({x}^{s}_t) -(\widehat{v}^s_t)_{\mathcal{G}_{j(t)}} , ( \Delta_t^s)_{\mathcal{G}_{j(t)}}\right  \rangle
 \end{eqnarray}
 where the first inequality uses (\ref{coordinate_lipschitz_constant2}), and the second inequality uses (\ref{lemmaequ0.05}) in Lemma \ref{lemma0.6}.
 Consider the expectation of the last term on the right-hand side of (\ref{equ_lema1_5}), we have
 \begin{eqnarray} \label{equ_lema1_6}
&& \mathbb{E} \left  \langle  \nabla_{\mathcal{G}_{j(t)}} f({x}^{s}_t) -(\widehat{v}^s_t)_{\mathcal{G}_{j(t)}} , (\Delta_t^s)_{\mathcal{G}_{j(t)}}\right  \rangle
\\ \nonumber &=&  \mathbb{E} \left  \langle  \nabla_{\mathcal{G}_{j(t)}} f({x}^{s}_t) - \left (\nabla f_{i_{t}}(\widehat{x}^{s}_{t}) - \nabla f_{i_{t}}(\widetilde{x}^s) + \nabla f(\widetilde{x}^{s}) \right  )_{\mathcal{G}_{j(t)}} , ( \Delta_t^s)_{\mathcal{G}_{j(t)}}\right  \rangle
\\ \nonumber &=& \mathbb{E}  \left  \langle  \nabla_{\mathcal{G}_{j(t)}} f({x}^{s}_t) -(\nabla f_{i_{t}}(\widehat{x}^{s}_{t}) -\nabla f_{i_{t}}({x}^{s}_{t}) + \nabla f_{i_{t}}({x}^{s}_{t}) - \nabla f_{i_{t}}(\widetilde{x}^s) + \nabla f(\widetilde{x}^{s}) )_{\mathcal{G}_{j(t)}} ,
(\Delta_t^s)_{\mathcal{G}_{j(t)}}\right  \rangle
\\ \nonumber &=& \mathbb{E}  \left  \langle \nabla_{\mathcal{G}_{j(t)}} f({x}^{s}_t) - \nabla_{\mathcal{G}_{j(t)}} f(\widetilde{x}^{s}), ( \Delta_t^s)_{\mathcal{G}_{j(t)}} \right  \rangle + \mathbb{E}  \left  \langle \nabla_{\mathcal{G}_{j(t)}} f_{i_{t}}({x}^{s}_t) - \nabla_{\mathcal{G}_{j(t)}} f_{i_{t}}(\widehat{x}^{s}_{t}), (\Delta_t^s)_{\mathcal{G}_{j(t)}} \right  \rangle
\\ \nonumber &&  +\mathbb{E}  \left  \langle \nabla_{\mathcal{G}_{j(t)}} f_{i_{t}}(\widetilde{x}^s) - \nabla_{\mathcal{G}_{j(t)}} f_{i_{t}}({x}^{s}_{t}), ( \Delta_t^s)_{\mathcal{G}_{j(t)}} \right  \rangle
\\ \nonumber & \leq & \mathbb{E} \left ( \left  \| \nabla_{\mathcal{G}_{j(t)}} f({x}^{s}_t) - \nabla_{\mathcal{G}_{j(t)}} f(\widetilde{x}^{s})\right  \| \left  \| \Delta_t^s \right  \| \right ) + \mathbb{E} \left ( \left  \|  \nabla_{\mathcal{G}_{j(t)}} f_{i_{t}}({x}^{s}_t) - \nabla_{\mathcal{G}_{j(t)}} f_{i_{t}}(\widehat{x}^{s}_{t})\right  \| \left  \|  \Delta_t^s \right  \|  \right )
\\ \nonumber &&  +\mathbb{E} \left ( \left  \| \nabla_{\mathcal{G}_{j(t)}} f_{i_{t}}(\widetilde{x}^s) - \nabla_{\mathcal{G}_{j(t)}} f_{i_{t}}({x}^{s}_{t})\right  \|  \left  \|  \Delta_t^s \right  \|  \right )
\\ \nonumber & = &  \frac{1}{k}\mathbb{E} \left ( \left  \| \nabla f({x}^{s}_t) - \nabla f(\widetilde{x}^{s})\right  \| \left  \|  \overline{x}^s_{t+1} -x_t^s \right  \| \right ) + \frac{1}{k}\mathbb{E} \left ( \left  \|  \nabla f_{i_{t}}({x}^{s}_t) - \nabla f_{i_{t}}(\widehat{x}^{s}_{t})\right  \| \left  \|   \overline{x}^s_{t+1} -x_t^s \right  \|  \right )
\\ \nonumber &&  +\frac{1}{k}\mathbb{E} \left ( \left  \| \nabla f_{i_{t}}(\widetilde{x}^s) - \nabla f_{i_{t}}({x}^{s}_{t})\right  \|  \left  \|  \overline{x}^s_{t+1} -x_t^s  \right  \|  \right )
\\ \nonumber & \leq & \frac{1}{k}\mathbb{E} \left ( 2L_{nor} \| {x}^s_{t} - \widetilde{x}^s  \| \| \overline{x}^s_{t+1} -x_t^s  \| + L_{res} \sum_{t' \in K(t)}  \left \|  \Delta^{s}_{t'} \right \| \| \overline{x}^s_{t+1} -x_t^s  \|\right )
\\ \nonumber & \leq &  \frac{1}{k}\mathbb{E} \left ( 2L_{nor}\sum_{t'=0}^{t-1} \|  \Delta_{t'}^s  \| \| \overline{x}^s_{t+1} -x_t^s  \| + L_{res} \sum_{t'= t-\tau}^{t-1}  \|  \Delta^{s}_{t'}  \| \| \overline{x}^s_{t+1} -x_t^s  \|\right )
\\ \nonumber & \leq &   2L_{nor}\sum_{t'=0}^{t-1} \frac{\rho^{\frac{t-t'}{2}}}{k^{3/2}} \mathbb{E} \| \overline{x}^s_{t+1} -x_t^s  \|^2
 + L_{res} \sum_{t'= t-\tau}^{t-1} \frac{\rho^{\frac{t-t'}{2}}}{k^{3/2}} \mathbb{E}  \| \overline{x}^s_{t+1} -x_t^s  \|^2
\\ \nonumber &= & k^{-3/2} \left ( 2L_{nor} \frac{\rho^{\frac{1}{2}}  - \rho^{\frac{m}{2}}}{1-\rho^{\frac{1}{2}}} + L_{res}  \frac{\rho^{\frac{1}{2}} - \rho^{\frac{\tau+1}{2}}}{1-\rho^{\frac{1}{2}}} \right )
\cdot \mathbb{E}  \| \overline{x}^s_{t+1} -x_t^s  \|^2
\\ \nonumber &=& k^{-3/2} \left (2  L_{nor} \theta_2 +L_{res} \theta_1  \right ) \mathbb{E}  \| \overline{x}^s_{t+1} -x_t^s  \|^2
 \end{eqnarray}
where  the first inequality uses the  Cauchy-Schwarz inequality \citep{callebaut1965generalization}, the third inequality uses (\ref{equdef1}) and (\ref{equdef2}), the sixth inequality uses (\ref{equ_lema1_2.11}).

  By taking expectations on both sides of (\ref{equ_lema1_5}) and substituting (\ref{equ_lema1_6}), we have
  \begin{eqnarray}
&& \mathbb{E}  F(x_{t+1}^s)
 \\ & \leq & \nonumber  \mathbb{E} F(x_{t}^s) +  \frac{L_{\max}}{k}\left ( \frac{1}{2} -\frac{ 1 }{\gamma}  \right ) \mathbb{E} \| \overline{x}^s_{t+1} -x_t^s  \|^2
 +  \mathbb{E}   \left  \langle  \nabla_{\mathcal{G}_{j(t)}} f({x}^{s}_t) -(\widehat{v}^s_t)_{\mathcal{G}_{j(t)}} , ( \Delta_t^s)_{\mathcal{G}_{j(t)}}\right  \rangle
 \\ & \leq & \nonumber\mathbb{E}  F(x_{t}^s)- \frac{1}{k} \cdot
 \left (L_{\max} \left  (\frac{ 1 }{\gamma} -  \frac{1}{2} \right ) -\frac{2  L_{nor} \theta_2 +L_{res} \theta_1 }{k^{1/2}} \right )\mathbb{E}  \| \overline{x}^s_{t+1} -x_t^s  \|^2
    \end{eqnarray}
    where $ L_{\max} \left  (\frac{ 1}{\gamma} -  \frac{1}{2} \right ) -\frac{2  L_{nor} \theta_2 +L_{res} \theta_1 }{k^{1/2}}\geq 0 $ because that $\gamma^{-1} \geq \frac{1}{2} + \frac{2  \Lambda_{nor} \theta_2 +\Lambda_{res} \theta_1 }{k^{1/2}}$.  This completes the proof.
\end{proof}

\begin{theorem} \label{theorem2} Let $\rho$ be a constant that satisfies $\rho> 1$, and define the quantity $\theta' =  \frac{\rho^{\tau+1}- \rho}{\rho-1}
$. Suppose the nonnegative steplength parameter  $\gamma>0$ satisfies $ 1-  \Lambda_{nor} \gamma - \frac{ \gamma\tau \theta'}{n} - \frac{2 (\Lambda_{res} \theta_1 +\Lambda_{nor} \theta_2 ) \gamma}{n^{1/2} } \geq 0$. If   the optimal strong convexity holds for $f$ with $l>0$, we have
\begin{eqnarray}\label{theorem_equ_2}
 \mathbb{E}F(x^s ) - F^*
\leq \frac{L_{\max }}{2\gamma} \left ( \frac{1}{1+\frac{2 m \gamma l}{2k(l \gamma +L_{\max})}} \right )^s \cdot  \left ( \| x^0 - \mathcal{P}_S( x^0) \|^2 +  \frac{2\gamma}{L_{\max}}\left (\mathbb{E}F(x^0 ) - F^* \right ) \right )
\end{eqnarray}
If $f$ is a general smooth convex function, we have
\begin{eqnarray} \label{theorem_equ_1}
 \mathbb{E}F(x^s ) - F^*
 \leq \frac{n L_{\max} \| x^0 - \mathcal{P}_S( x^0) \|^2  + 2 \gamma k \left ( F(x^0 ) - F^*   \right )}{2 \gamma k+2m \gamma s}
\end{eqnarray}
\end{theorem}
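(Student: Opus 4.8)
The plan is to introduce the Lyapunov quantity $W^s=\mathbb{E}\|x^s-\mathcal{P}_S(x^s)\|^2+\tfrac{2\gamma}{L_{\max}}\mathbb{E}\big(F(x^s)-F^*\big)$, prove a one–inner–step recursion for it, telescope over the $m$ inner steps of an outer loop, and close with Assumption \ref{assumption2} (linear case) or with the monotonicity of Lemma \ref{lemma2} (sublinear case). The idealized iterates $x_t^s$ form one continuous sequence with $x_0^{s+1}=x^s$ and $x_m^{s+1}=x^{s+1}$, so a per-step inequality can be summed either within one outer loop or over all $ms$ steps; and since $\|x_{t+1}^s-\mathcal{P}_S(x_{t+1}^s)\|\le\|x_{t+1}^s-\mathcal{P}_S(x_t^s)\|$, a moving reference point causes no trouble in the telescoping.

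\emph{One prox step.} Fix an inner iteration $t$ of outer loop $s$ and set $\bar p=\mathcal{P}_S(x_t^s)$. Expanding $\|x_{t+1}^s-\bar p\|^2$ around $x_t^s$ and inserting the optimality inequality (\ref{lemmaequ0.05}) of Lemma \ref{lemma0.6} with $x=\bar p$ (after writing $(x_{t+1}^s-\bar p)_{\mathcal{G}_{j(t)}}=(\Delta_t^s)_{\mathcal{G}_{j(t)}}+(x_t^s-\bar p)_{\mathcal{G}_{j(t)}}$) gives
\begin{equation*}
\|x_{t+1}^s-\bar p\|^2\le\|x_t^s-\bar p\|^2-\|\Delta_t^s\|^2-\tfrac{2\gamma}{L_{\max}}\langle(\widehat v_t^s)_{\mathcal{G}_{j(t)}},(x_{t+1}^s-\bar p)_{\mathcal{G}_{j(t)}}\rangle-\tfrac{2\gamma}{L_{\max}}\big(g_{\mathcal{G}_{j(t)}}((x_{t+1}^s)_{\mathcal{G}_{j(t)}})-g_{\mathcal{G}_{j(t)}}(\bar p_{\mathcal{G}_{j(t)}})\big).
\end{equation*}
I then split the $\widehat v_t^s$–inner product: the part paired with $\Delta_t^s$ is handled by the block Lipschitz inequality (\ref{coordinate_lipschitz_constant2}), producing $f(x_{t+1}^s)-f(x_t^s)$ plus the gradient–mismatch term $\langle\nabla_{\mathcal{G}_{j(t)}}f(x_t^s)-(\widehat v_t^s)_{\mathcal{G}_{j(t)}},\Delta_t^s\rangle$ (already bounded in (\ref{equ_lema1_6})); the part paired with $x_t^s-\bar p$, after $\mathbb{E}_{i_t}$, equals $\langle\nabla_{\mathcal{G}_{j(t)}}f(\widehat x_t^s),(x_t^s-\bar p)_{\mathcal{G}_{j(t)}}\rangle$, and convexity of $f$ together with $x_t^s-\widehat x_t^s=\sum_{t'\in K(t)}B_{t'}^s\Delta_{t'}^s$ converts this into the progress $f(x_t^s)-f(\bar p)$ plus an asynchrony term in $\langle\nabla f(x_t^s)-\nabla f(\widehat x_t^s),x_t^s-\widehat x_t^s\rangle$. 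Taking $\mathbb{E}_{j(t)}$ (uniform over the $k$ blocks) turns the block-wise $g$ terms into $\tfrac{k-1}{k}g(x_t^s)+\tfrac{1}{k}g(\bar p)-\mathbb{E}_{j(t)}g(x_{t+1}^s)$ and $\|\Delta_t^s\|^2$ into $\tfrac{1}{k}\|\overline x_{t+1}^s-x_t^s\|^2$.

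\emph{Absorbing the asynchrony and telescoping.} Every asynchrony error term is a sum, over indices $t'$ lagging behind $t$, of quantities of the form $\|\Delta_{t'}^s\|\cdot\|\overline x_{t+1}^s-x_t^s\|$ or $\|\Delta_{t'}^s\|^2$, which I bound using Lemma \ref{lemma0.5}, Cauchy–Schwarz, the $\rho$-weighted Young's inequality of (\ref{equ_lema1_2.11}), and Lemma \ref{lemma1} (to replace $\|x_{t'}^s-\overline x_{t'+1}^s\|^2$ by $\rho^{\,t-t'}\|x_t^s-\overline x_{t+1}^s\|^2$); summing the geometric series over the $|K(t)|\le\tau$ lagged indices and over $t'\le t-1$ reproduces exactly $\theta_1,\theta_2$ and $\theta'=\sum_{t'=1}^{\tau}\rho^{t'}$. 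Collecting all contributions, the coefficient of $\mathbb{E}\|\overline x_{t+1}^s-x_t^s\|^2$ is nonpositive precisely under the stated step-size hypothesis, so that term is dropped, and (using $\|x_{t+1}^s-\mathcal{P}_S(x_{t+1}^s)\|\le\|x_{t+1}^s-\bar p\|$) I obtain
\begin{equation*}
\tfrac{2\gamma}{L_{\max}}\mathbb{E}\big(F(x_{t+1}^s)-F^*\big)+\mathbb{E}\|x_{t+1}^s-\mathcal{P}_S(x_{t+1}^s)\|^2\le\tfrac{2\gamma(k-1)}{kL_{\max}}\mathbb{E}\big(F(x_t^s)-F^*\big)+\mathbb{E}\|x_t^s-\mathcal{P}_S(x_t^s)\|^2.
\end{equation*}
Summing this over $t=0,\dots,m-1$ and using $\mathbb{E}F(x_{t+1}^s)\le\mathbb{E}F(x_t^s)$ from Lemma \ref{lemma2} gives $\tfrac{2\gamma}{L_{\max}}(1+\tfrac{m}{k})\mathbb{E}(F(x^{s+1})-F^*)+\mathbb{E}\|x^{s+1}-\mathcal{P}_S(x^{s+1})\|^2\le W^s$. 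In the strongly convex case, Assumption \ref{assumption2} gives $\mathbb{E}\|x^{s+1}-\mathcal{P}_S(x^{s+1})\|^2\le\tfrac{2}{l}\mathbb{E}(F(x^{s+1})-F^*)$, so the left side is at least $(1+\alpha)W^{s+1}$ with $\alpha=\tfrac{m\gamma l}{k(l\gamma+L_{\max})}$; hence $W^{s+1}\le(1+\alpha)^{-1}W^s$, and iterating with $F(x^s)-F^*\le\tfrac{L_{\max}}{2\gamma}W^s$ yields (\ref{theorem_equ_2}). In the general convex case I sum the one-step inequality over all $ms$ inner iterations, drop the final distance term, use monotonicity to get $\tfrac{2\gamma}{kL_{\max}}\sum_{t<ms}\mathbb{E}(F(x_t)-F^*)\ge\tfrac{2\gamma ms}{kL_{\max}}\mathbb{E}(F(x^s)-F^*)$ and keep the residual $\tfrac{2\gamma}{L_{\max}}\mathbb{E}(F(x^s)-F^*)=\tfrac{2\gamma k}{kL_{\max}}\mathbb{E}(F(x^s)-F^*)$, so that the total factor in front of the gap is $\tfrac{2\gamma(k+ms)}{kL_{\max}}$; rearranging gives (\ref{theorem_equ_1}).

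\emph{Main obstacle.} The real work is the first two steps: carefully bookkeeping the many cross terms generated by the inconsistent read $\widehat x_t^s$ and by the variance-reduced gradient $\widehat v_t^s$, pairing each against the single negative budget $\|\Delta_t^s\|^2=\tfrac{1}{k}\|\overline x_{t+1}^s-x_t^s\|^2$ through the $\rho$-weighted Young's inequality, and checking that the accumulated coefficient is exactly dominated by the stated step-size bound. Once the clean one-step recursion is established, the remainder is a telescoping argument together with the two convexity properties.
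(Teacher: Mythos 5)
Your proposal follows essentially the same route as the paper: the same Lyapunov quantity $S(x^s)=\mathbb{E}\|x^s-\mathcal{P}_S(x^s)\|^2+\tfrac{2\gamma}{L_{\max}}\mathbb{E}(F(x^s)-F^*)$, the same one-step expansion via Lemma \ref{lemma0.6} split into a convexity/progress term and a $\Delta_t^s$-paired term, the same $\rho$-weighted absorption of the asynchrony errors through Lemma \ref{lemma0.5}, (\ref{equ_lema1_2.11}) and Lemma \ref{lemma1}, and the same closing via Lemma \ref{lemma2} plus the optimal-strong-convexity inequality. The only (minor) deviation is that you telescope $f(x_t^s)-f(x_{t+1}^s)$ per inner step using (\ref{coordinate_lipschitz_constant2}) and the mismatch bound (\ref{equ_lema1_6}), whereas the paper keeps the reference-gradient term $\langle(\Delta_t^s)_{\mathcal{G}_{j(t)}},\nabla_{\mathcal{G}_{j(t)}}f(\widetilde{x}^s)\rangle$ and telescopes it only at the epoch level before invoking $L_{nor}$-smoothness; this changes the exact constants in the step-size condition slightly but not the structure or the stated rates.
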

\begin{proof}We have that
\begin{eqnarray} \label{equ_lema1_7}
&& \| x_{t+1}^s - \mathcal{P}_S( x^s_{t+1}) \|^2 \leq \| x_{t+1}^s - \mathcal{P}_S( x^s_{t}) \|^2
 =   \| x_{t}^s +  \Delta_t^s - \mathcal{P}_S( x^s_{t}) \|^2
\\ & = & \nonumber  \| x_{t}^s - \mathcal{P}_S( x^s_{t}) \|^2 - \| \Delta_t^s \|^2 - 2 \langle \left (  \mathcal{P}_S( x^s_{t}) -x_{t}^s 　 - \Delta_t^s \right )_{\mathcal{G}_{j(t)}},( \Delta_t^s)_{\mathcal{G}_{j(t)}} \rangle
\\ & =  & \nonumber  \| x_{t}^s - \mathcal{P}_S( x^s_{t}) \|^2 - \|  \Delta_t^s \|^2 - 2 \langle \left (  \mathcal{P}_S( x^s_{t}) -x_{t+1}^s  \right )_{\mathcal{G}_{j(t)}},( \Delta_t^s)_{\mathcal{G}_{j(t)}} \rangle
\\ & \leq & \nonumber   \| x_{t}^s - \mathcal{P}_S( x^s_{t}) \|^2  - \|  \Delta_t^s \|^2+
\frac{2 \gamma}{L_{\max}} \left ( \langle \left (  \mathcal{P}_S( x^s_{t}) -x_{t+1}^s 　\right )_{\mathcal{G}_{j(t)}},( \widehat{v}_t^s)_{\mathcal{G}_{j(t)}} \rangle \right ) +
\\ &  & \nonumber \frac{2 \gamma}{L_{\max}}  \left ( g_{\mathcal{G}_{j(t)}}(\mathcal{P}_S( x^s_{t})_{\mathcal{G}_{j(t)}}) -g_{j(t)}( x^s_{t+1})_{\mathcal{G}_{j(t)}})   \right )
\\ & = & \nonumber   \| x_{t}^s - \mathcal{P}_S( x^s_{t}) \|^2  - \| \Delta_t^s \|^2 +
\frac{2 \gamma}{L_{\max}}  \underbrace{ \left ( \langle \left (  \mathcal{P}_S( x^s_{t}) -x_{t}^s 　\right )_{\mathcal{G}_{j(t)}},( \widehat{v}_t^s)_{\mathcal{G}_{j(t)}} \rangle \right ) }_{T_1}  +
\\ &  & \nonumber \frac{2 \gamma}{L_{\max}} \underbrace{ \left (\langle \left (  \Delta_{t}^s　\right )_{\mathcal{G}_{j(t)}},( \widehat{v}_t^s)_{\mathcal{G}_{j(t)}} \rangle \right ) }_{T_2} +
 \frac{2 \gamma}{L_{\max}}  \left ( g_{j(t)}(\mathcal{P}_S( x^s_{t})_{\mathcal{G}_{j(t)}}) - g_{\mathcal{G}_{j(t)}}( x^s_{t+1})_{\mathcal{G}_{j(t)}}) \right )
 \end{eqnarray}
 where the first inequality comes from the definition of function $\mathcal{P}_{S(x)} = \arg \min_{y\in S }\| y-x \|^2$,  and the second inequality uses (\ref{lemmaequ0.05}) in Lemma \ref{lemma0.6}. For the expectation of $T_1$, we have
 \begin{eqnarray} \label{equ_lema1_8}
 && \mathbb{E}(T_1) =\mathbb{E}  {  \left ( \langle \left (  \mathcal{P}_S( x^s_{t}) -x_{t}^s 　\right )_{\mathcal{G}_{j(t)}},( \widehat{v}_t^s)_{\mathcal{G}_{j(t)}} \rangle \right ) }
  \\ & = & \nonumber  \frac{1}{k}\mathbb{E}  \langle  \mathcal{P}_S( x^s_{t}) -x_{t}^s 　, \widehat{v}_t^s \rangle
   \\ & = & \nonumber \frac{1}{k}\mathbb{E}  \langle  \mathcal{P}_S( x^s_{t}) -x_{t}^s 　, \nabla f_{i_{t}}(\widehat{x}^{s}_{t}) - \nabla   f_{i_{t}}(\widetilde{x}^s) + \nabla f(\widetilde{x}^{s})  \rangle
  \\ & = & \nonumber \frac{1}{k}\mathbb{E}  \langle  \mathcal{P}_S( x^s_{t}) -x_{t}^s 　, \nabla f_{i_{t}}(\widehat{x}^{s}_{t}) \rangle +
    \frac{1}{k} \langle \mathbb{E} ( \mathcal{P}_S( x^s_{t}) -x_{t}^s )　, \mathbb{E} (- \nabla f_{i_{t}}(\widetilde{x}^s) + \nabla f(\widetilde{x}^{s}) ) \rangle
   \\ & = & \nonumber \frac{1}{k}\mathbb{E}  \langle  \mathcal{P}_S( x^s_{t}) -\widehat{x}_{t}^s 　, \nabla f_{i_{t}}(\widehat{x}^{s}_{t}) \rangle +  \frac{1}{k}\mathbb{E}  \langle  \widehat{x}_{t}^s -x_{t}^s 　, \nabla f_{i_{t}}(\widehat{x}^{s}_{t}) \rangle
   \\ & = & \nonumber \frac{1}{k}\mathbb{E}  \langle  \mathcal{P}_S( x^s_{t}) -\widehat{x}_{t}^s 　, \nabla f_{i_{t}}(\widehat{x}^{s}_{t}) \rangle +
  \frac{1}{k}\mathbb{E} \sum_{t'=0}^{ | K(t)|-1}  \langle \widehat{x}_{t,t'}^s -\widehat{x}_{t,t'+1}^s ,\nabla f_{i_{t}}(\widehat{x}^{s}_{t})  \rangle
     \\ & = & \nonumber \frac{1}{k}\mathbb{E}  \langle  \mathcal{P}_S( x^s_{t}) -\widehat{x}_{t}^s 　, \nabla f_{i_{t}}(\widehat{x}^{s}_{t}) \rangle +
  \frac{1}{k}\mathbb{E} \sum_{t'=0}^{ | K(t)|-1}  \langle \widehat{x}_{t,t'}^s -\widehat{x}_{t,t'+1}^s ,\nabla f_{i_{t}}(\widehat{x}^{s}_{t,t'})  \rangle
    \\ &  & \nonumber +  \frac{1}{k}\mathbb{E} \sum_{t'=0}^{ | K(t)|-1}  \langle \widehat{x}_{t,t'}^s -\widehat{x}_{t,t'+1}^s ,\nabla f_{i_{t}}(\widehat{x}^{s}_{t}) -\nabla f_{i_{t}}(\widehat{x}^{s}_{t,t'})  \rangle
  \\ & \leq  & \nonumber \frac{1}{k}\mathbb{E} \left ( f_{i_t}( \mathcal{P}_S( x_t^s ))- f_{i_{t}}(\widehat{x}^{s}_{t})  \right )  +  \frac{1}{k}\mathbb{E} \sum_{t'=0}^{ | K(t)|-1}  \langle \widehat{x}_{t,t'}^s -\widehat{x}_{t,t'+1}^s ,\nabla f_{i_{t}}(\widehat{x}^{s}_{t}) -\nabla f_{i_{t}}(\widehat{x}^{s}_{t,t'})  \rangle
    \\ &  & \nonumber +
  \frac{1}{k}\mathbb{E} \sum_{t'=0}^{ | K(t)|-1} \left (  f_{i_{t}}(\widehat{x}^{s}_{t,t'})  - f_{i_{t}}(\widehat{x}^{s}_{t,t'+1}) + \frac{L_{\max}}{2} \|  \widehat{x}_{t,t'}^s -\widehat{x}_{t,t'+1}^s  \|^2 \right )
\\ & =  & \nonumber \frac{1}{k}\mathbb{E} \left ( f_{i_t}( \mathcal{P}_S( x_t^s ))- f_{i_{t}}({x}^{s}_{t})  \right )  +  \frac{1}{k}\mathbb{E} \sum_{t'=0}^{ | K(t)|-1}  \langle \widehat{x}_{t,t'}^s -\widehat{x}_{t,t'+1}^s ,\nabla f_{i_{t}}(\widehat{x}^{s}_{t}) -\nabla f_{i_{t}}(\widehat{x}^{s}_{t,t'})  \rangle
\\ &  & \nonumber +
\frac{L_{\max}}{2k}\mathbb{E} \sum_{t'=0}^{ | K(t)|-1} \left (  \|  \widehat{x}_{t,t'}^s -\widehat{x}_{t,t'+1}^s  \|^2 \right )
\\ & =  & \nonumber \frac{1}{k}\mathbb{E} \left ( f_{i_t}( \mathcal{P}_S( x_t^s ))- f_{i_{t}}({x}^{s}_{t})  \right )  +  \frac{L_{\max}}{2k}\mathbb{E} \sum_{t'=0}^{ | K(t)|-1} \left (  \|  \widehat{x}_{t,t'}^s -\widehat{x}_{t,t'+1}^s  \|^2 \right )
\\ &  & \nonumber + \frac{1}{k}\mathbb{E} \sum_{t'=0}^{ | K(t)|-1}  \langle \widehat{x}_{t,t'}^s -\widehat{x}_{t,t'+1}^s ,\sum_{t''=0}^{t'-1} \nabla f_{i_{t}}(\widehat{x}^{s}_{t''}) -\nabla f_{i_{t}}(\widehat{x}^{s}_{t,t''+1})  \rangle
\\ & \leq  & \nonumber \frac{1}{k}\mathbb{E} \left ( f_{i_t}( \mathcal{P}_S( x_t^s ))- f_{i_{t}}({x}^{s}_{t})  \right )  +  \frac{L_{\max}}{2k}\mathbb{E} \sum_{t'=0}^{ | K(t)|-1} \left (  \|  \widehat{x}_{t,t'}^s -\widehat{x}_{t,t'+1}^s  \|^2 \right )
\\ &  & \nonumber + \frac{L_{\max}}{k}\mathbb{E} \sum_{t'=0}^{ | K(t)|-1} \left ( \left \| \widehat{x}_{t,t'}^s -\widehat{x}_{t,t'+1}^s \right \| \sum_{t''=0}^{t'-1} \left \| \widehat{x}^{s}_{t''} -\widehat{x}^{s}_{t,t''+1} \right \|  \right )
\\ & =  & \nonumber \frac{1}{k}\mathbb{E} \left ( f_{i_t}( \mathcal{P}_S( x_t^s ))- f_{i_{t}}({x}^{s}_{t})  \right )  +  \frac{L_{\max}}{2k}\mathbb{E} \sum_{t'=0}^{ | K(t)|-1} \left (  \|  \widehat{x}_{t,t'}^s -\widehat{x}_{t,t'+1}^s  \|^2 \right )
\\ &  & \nonumber + \frac{L_{\max}}{2k}\mathbb{E} \left ( \left ( \sum_{t'=0}^{ | K(t)|-1} \left  \| \widehat{x}_{t,t'}^s -\widehat{x}_{t,t'+1}^s \right \| \right )^2
-  \sum_{t'=0}^{ | K(t)|-1} \left (  \|  \widehat{x}_{t,t'}^s -\widehat{x}_{t,t'+1}^s  \|^2 \right ) \right )
\\ & =  & \nonumber \frac{1}{k}\mathbb{E} \left ( f_{i_t}( \mathcal{P}_S( x_t^s ))- f_{i_{t}}({x}^{s}_{t})  \right )  + \frac{L_{\max}}{2k}\mathbb{E} \left ( \sum_{t'=0}^{ | K(t)|-1} \left  \| \widehat{x}_{t,t'}^s -\widehat{x}_{t,t'+1}^s \right \| \right )^2
\\ & \leq & \nonumber \frac{1}{k}\mathbb{E} \left ( f_{i_t}( \mathcal{P}_S( x_t^s ))- f_{i_{t}}({x}^{s}_{t})  \right )  + \frac{L_{\max} \tau}{2k}\mathbb{E} \sum_{t'=0}^{ | K(t)|-1} \left  \| \widehat{x}_{t,t'}^s -\widehat{x}_{t,t'+1}^s \right \|^2
\\ & = & \nonumber \frac{1}{k}\mathbb{E} \left ( f_{i_t}( \mathcal{P}_S( x_t^s ))- f_{i_{t}}({x}^{s}_{t})  \right )  + \frac{L_{\max} \tau}{2k} \sum_{t'=0}^{ | K(t)|-1} \mathbb{E}  \left  \| B_{t'}^{s} \Delta^{s}_{t'}  \right \|^2
\\ & \leq & \nonumber \frac{1}{k}\mathbb{E} \left ( f_{i_t}( \mathcal{P}_S( x_t^s ))- f_{i_{t}}({x}^{s}_{t})  \right )  + \frac{L_{\max} \tau}{2k^2} \sum_{t'=0}^{ | K(t)|-1} \mathbb{E}  \left  \| \overline{x}^s_{t'+1} -x_{t'}^s    \right \|^2
\\ & \leq & \nonumber \frac{1}{k}\mathbb{E} \left ( f_{i_t}( \mathcal{P}_S( x_t^s ))- f_{i_{t}}({x}^{s}_{t})  \right )  + \frac{L_{\max} \tau}{2k^2} \sum_{t'=1}^{\tau }  \rho^{t'}  \mathbb{E}\left  \| \overline{x}^s_{t+1} -x_{t}^s    \right \|^2
\\ & \leq & \nonumber \frac{1}{k}\mathbb{E} \left  (f( \mathcal{P}_S( x_t^s ))- f(x_t^s) \right ) + \frac{L_{\max  }\tau \theta'}{2k^2} \mathbb{E}(\| x_t^s-\overline{x}_{t+1}^s \|^2)
  \end{eqnarray}
   where the fifth equality comes from that $x^s_t$ is independent to $i_t$, the sixth equality uses Lemma \ref{lemma0.5}, the first inequality uses the convexity of $f_i$ and (\ref{coordinate_lipschitz_constant2}), the second inequality uses (\ref{coordinate_lipschitz_constant1}). For the expectation of $T_2$, we have
 \begin{eqnarray} \label{equ_lema1_9}
 && \mathbb{E}(T_2) =\mathbb{E} \langle \left (  \Delta_{t}^s　\right )_{\mathcal{G}_{j(t)}},( \widehat{v}_t^s)_{\mathcal{G}_{j(t)}} \rangle
 \\ & = & \nonumber   \mathbb{E} \langle \left ( \Delta_{t}　\right )_{\mathcal{G}_{j(t)}}, \left ( \nabla f_{i_{t}}(\widehat{x}^{s}_{t}) - \nabla f_{i_{t}}(\widetilde{x}^s) + \nabla f(\widetilde{x}^{s}) \right )_{\mathcal{G}_{j(t)}} \rangle
 \\ & = & \nonumber   \mathbb{E} \langle \left ( \Delta_{t}　\right )_{\mathcal{G}_{j(t)}}, \left ( \nabla f_{i_{t}}(\widehat{x}^{s}_{t}) -\nabla f_{i_{t}}({x}^{s}_{t}) + \nabla f_{i_{t}}({x}^{s}_{t})  - \nabla f_{i_{t}}(\widetilde{x}^s) + \nabla f(\widetilde{x}^{s}) \right )_{\mathcal{G}_{j(t)}} \rangle
 \\ & = & \nonumber   \mathbb{E} \langle \left ( \Delta_{t}　\right )_{\mathcal{G}_{j(t)}},  \left (\nabla f_{i_{t}}(\widehat{x}^{s}_{t}) -\nabla f_{i_{t}}({x}^{s}_{t}) \right  )_{\mathcal{G}_{j(t)}} \rangle +
  \\ &  & \nonumber \mathbb{E} \langle \left ( \Delta_{t}　\right )_{\mathcal{G}_{j(t)}}, \left  (\nabla f_{i_{t}}({x}^{s}_{t}) - \nabla f_{i_{t}}(\widetilde{x}^s) \right )_{\mathcal{G}_{j(t)}}  \rangle +
 \mathbb{E} \langle \left ( \Delta_{t}　\right )_{\mathcal{G}_{j(t)}},   \nabla_{\mathcal{G}_{j(t)}} f (\widetilde{x}^s) \rangle
 \\ & \leq & \nonumber  \frac{1}{k} \mathbb{E} \left ( \| \overline{x}^s_{t+1} -x_t^s \|  \left \| \nabla f_{i_{t}}(\widehat{x}^{s}_{t}) -\nabla f_{i_{t}}({x}^{s}_{t}) \right \| \right ) +
  \\ &  & \nonumber\frac{1}{k} \mathbb{E} \left ( \| \overline{x}^s_{t+1} -x_t^s \|  \left \| \nabla f_{i_{t}}({x}^{s}_{t}) - \nabla f_{i_{t}}(\widetilde{x}^s) \right \| \right )
  + \mathbb{E} \langle \left ( \Delta_{t}　\right )_{\mathcal{G}_{j(t)}},   \nabla_{\mathcal{G}_{j(t)}} f (\widetilde{x}^s) \rangle
\\ & \leq & \nonumber   \frac{L_{res}}{k} \left ( \sum_{t' \in K(t)}  \| \overline{x}^s_{t+1} -x_t^s  \| \|  \Delta^{s}_{t'}   \|\right )
\\ \nonumber &  &  \frac{L_{nor }}{k} \mathbb{E} \left (  \| \overline{x}^s_{t+1} -x_t^s  \| \| {x}^s_{t} - {x}^s  \|  \right )
  + \mathbb{E} \langle \left ( \Delta_{t}　\right )_{\mathcal{G}_{j(t)}},   \nabla_{\mathcal{G}_{j(t)}} f (\widetilde{x}^s) \rangle
\\ & \leq & \nonumber   \frac{L_{res}}{k} \left ( \sum_{t' \in K(t)}  \| \overline{x}^s_{t+1} -x_t^s  \| \|  \Delta^{s}_{t'}   \|\right )
\\ \nonumber &  &  \frac{L_{nor }}{k} \mathbb{E} \left ( \sum_{t'=0}^{t-1} \| \overline{x}^s_{t+1} -x_t^s  \| \|  \Delta^{s}_{t'}   \|  \right )
 + \mathbb{E} \langle \left ( \Delta_{t}　\right )_{\mathcal{G}_{j(t)}},   \nabla_{\mathcal{G}_{j(t)}} f (\widetilde{x}^s) \rangle
\\ & \leq & \nonumber   \frac{L_{res}}{k^{3/2}} \sum_{t'=t-\tau}^{t-1} \rho^{(t-t')/2} \mathbb{E}(\| x_t^s-\overline{x}_{t+1}^s \|^2)
\\ &  & \nonumber  + \frac{L_{nor}}{k^{3/2}} \sum_{t'=0}^{t-1} \rho^{(t-t')/2} \mathbb{E}(\| x_t^s-\overline{x}_{t+1}^s \|^2)
  + \mathbb{E} \langle \left ( \Delta_{t}　\right )_{\mathcal{G}_{j(t)}},   \nabla_{\mathcal{G}_{j(t)}} f (\widetilde{x}^s) \rangle
\\ & = & \nonumber   \frac{1}{k^{3/2}} \left (L_{res} \theta_1+ L_{nor} \theta_2  \right ) \mathbb{E}(\| x_t^s-\overline{x}_{t+1}^s \|^2)
  + \mathbb{E} \langle \left ( \Delta_{t}　\right )_{\mathcal{G}_{j(t)}},   \nabla_{\mathcal{G}_{j(t)}} f (\widetilde{x}^s) \rangle
 \\ & \leq & \nonumber   \frac{L_{res} \theta_1+ L_{nor} \theta_2}{k^{3/2}}  \mathbb{E}(\| x_t^s-\overline{x}_{t+1}^s \|^2)
 + \mathbb{E} \langle \left ( \Delta_{t}　\right )_{\mathcal{G}_{j(t)}},   \nabla_{\mathcal{G}_{j(t)}} f (\widetilde{x}^s) \rangle
\end{eqnarray}
where the second inequality uses Lemma \ref{lemma0.5}, the fourth inequality uses (\ref{equ_lema1_2.11}).
By substituting the upper bounds from (\ref{equ_lema1_8}) and (\ref{equ_lema1_9}) into (\ref{equ_lema1_7}), we have
\begin{eqnarray} \label{equ_lema1_10}
&& \mathbb{E}\| x_{t+1}^s - \mathcal{P}_S( x^s_{t+1}) \|^2
\\ & \leq & \nonumber \mathbb{E}  \| x_{t}^s - \mathcal{P}_S( x^s_{t}) \|^2  - \frac{1}{k}  \mathbb{E}(\| x_t^s-\overline{x}_{t+1}^s \|^2)+
  \\ &  & \nonumber \frac{2 \gamma}{L_{\max}k}  \mathbb{E} \left  (f( \mathcal{P}_S( x_t^s ))- f(x_t^s) \right ) + \frac{ \gamma\tau \theta'}{k^2} \mathbb{E}(\| x_t^s-\overline{x}_{t+1}^s \|^2)
 \\ &  & \nonumber +  \frac{2 \gamma}{L_{\max}} \left ( \frac{L_{res} \theta_1+ L_{nor} \theta_2}{k^{3/2}} \theta \mathbb{E}(\| x_t^s-\overline{x}_{t+1}^s \|^2) + \mathbb{E} \langle \left ( \Delta_{t}　\right )_{\mathcal{G}_{j(t)}},   \nabla_{\mathcal{G}_{j(t)}} f (\widetilde{x}^s) \rangle  \right .
  \\ &  & \nonumber \left .  \frac{1}{k} \mathbb{E} g(\mathcal{P}_S( x^s_{t})) -  \mathbb{E} g( x^s_{t+1})  + \frac{k-1}{k} \mathbb{E} g( x^s_{t})
  \right )
    \\ & = & \nonumber   \mathbb{E}  \| x_{t}^s - \mathcal{P}_S( x^s_{t}) \|^2 +\frac{2 \gamma}{L_{\max}k}  \mathbb{E} \left  (f( \mathcal{P}_S( x_t^s ))- f(x_t^s) \right )-
 \\ &  & \nonumber   \frac{1}{k} \left (  1- \frac{ \gamma\tau \theta'}{k} - \frac{2 ({L_{res} \theta_1+ L_{nor} \theta_2}) \gamma}{k^{1/2} L_{\max}}\right ) \mathbb{E}(\| x_t^s-\overline{x}_{t+1}^s \|^2)
 + \frac{2 \gamma}{L_{\max}} \left (  \mathbb{E} \langle \left ( \Delta_{t}　\right )_{\mathcal{G}_{j(t)}},   \nabla_{\mathcal{G}_{j(t)}} f (\widetilde{x}^s) \rangle  \right .
  \\ &  & \nonumber \left .  \frac{1}{k} \mathbb{E} g(\mathcal{P}_S( x^s_{t})) -  \mathbb{E} g( x^s_{t+1})  + \frac{k-1}{k} \mathbb{E} g( x^s_{t})
  \right )
\end{eqnarray}
We consider a fixed stage $s+1$ such that $x_0^{s+1} = x_{m}^{s}$. By summing
the the inequality (\ref{equ_lema1_10}) over $t = 0,\cdots,m-1$,  we obtain
\begin{eqnarray}
\label{equ_lema1_10}
&& \mathbb{E}\| x^{s+1} - \mathcal{P}_S( x^{s+1}) \|^2
\\ & \leq & \nonumber  \mathbb{E}\| x^{s} - \mathcal{P}_S( x^{s}) \|^2 +
\sum_{t'=0}^{m-1}\frac{2 \gamma}{L_{\max}k}  \mathbb{E} \left  (f( \mathcal{P}_S( x_{t'}^{s+1} ))- f(x_{t'}^{s+1}) \right ) -
\\ &  & \nonumber  \sum_{t'=0}^{m-1} \frac{1}{k} \left (  1- \frac{ \gamma\tau \theta'}{k} - \frac{2 ({L_{res} \theta_1+ L_{nor} \theta_2})\gamma}{k^{1/2} L_{\max}}\right ) \cdot
\mathbb{E}(\| x_{t'}^{s+1}-\overline{x}_{t'+1}^{s+1} \|^2)
\\ &  & \nonumber + \frac{2 \gamma}{L_{\max}} \sum_{t'=0}^{m-1}   \mathbb{E} \left \langle \left ( \Delta_{t'}　\right )_{\mathcal{G}_{j(t')}},   \nabla_{\mathcal{G}_{j(t')}} f (\widetilde{x}^s) \right \rangle
\\ &  & \nonumber +  \frac{2 \gamma}{L_{\max}}  \sum_{t'=0}^{m-1} \left ( \frac{1}{k} \mathbb{E} g(\mathcal{P}_S( x^{s+1}_{t'})) -  \mathbb{E} g( x^{s+1}_{t'+1})   + \frac{n-1}{k} \mathbb{E} g( x^{s+1}_{t'})
  \right )
\\ & = & \nonumber  \mathbb{E}\| x^{s} - \mathcal{P}_S( x^{s}) \|^2 +
 \sum_{t'=0}^{m-1}\frac{2 \gamma}{L_{\max}k}  \mathbb{E} \left  (f( \mathcal{P}_S( x_{t'}^{s+1} ))- f(x_{t'}^{s+1}) \right ) -
\\ &  & \nonumber  \sum_{t'=0}^{m-1} \frac{1}{k} \left (  1- \frac{ \gamma\tau \theta'}{k} - \frac{2 ({L_{res} \theta_1+ L_{nor} \theta_2}) \gamma}{k^{1/2} L_{\max}}\right ) \cdot
 \mathbb{E}(\| x_{t'}^{s+1}-\overline{x}_{t'+1}^{s+1} \|^2)
\\ &  & \nonumber + \frac{2 \gamma}{L_{\max}}   \mathbb{E} \left \langle x^{s+1}-x^{s},   \nabla f (\widetilde{x}^s) \right \rangle
\\ &  & \nonumber +  \frac{2 \gamma}{L_{\max}}  \sum_{t'=0}^{m-1} \left ( \frac{1}{k} \mathbb{E} g(\mathcal{P}_S( x^{s+1}_{t'})) -  \mathbb{E} g( x^{s+1}_{t'+1})    + \frac{k-1}{k} \mathbb{E} g( x^{s+1}_{t'})
  \right )
\\ & \leq & \nonumber  \mathbb{E}\| x^{s} - \mathcal{P}_S( x^{s}) \|^2 +
\sum_{t'=0}^{m-1}\frac{2 \gamma}{L_{\max}k}  \mathbb{E} \left  (f( \mathcal{P}_S( x_{t'}^{s+1} ))- f(x_{t'}^{s+1}) \right ) -
\\ &  & \nonumber  \sum_{t'=0}^{m-1} \frac{1}{k} \left (  1- \frac{ \gamma\tau \theta'}{k} - \frac{2 ({L_{res} \theta_1+ L_{nor} \theta_2}) \gamma}{k^{1/2} L_{\max}}\right )  \cdot
 \mathbb{E}(\| x_{t'}^{s+1}-\overline{x}_{t'+1}^{s+1} \|^2)
\\ &  & \nonumber + \frac{2 \gamma}{L_{\max}}   \mathbb{E} \left (  f ({x}^s) - f ({x}^{s+1}) + \frac{L_{nor}}{2} \| x^{s+1}-{x}^{s} \|^2 \right )
\\ &  & \nonumber +  \frac{2 \gamma}{L_{\max}}  \sum_{t'=0}^{m-1} \left ( \frac{1}{k} \mathbb{E} g(\mathcal{P}_S( x^{s+1}_{t'})) -  \mathbb{E} g( x^{s+1}_{t'+1})  + \frac{k-1}{k} \mathbb{E} g( x^{s+1}_{t'})
  \right )
\\ & = & \nonumber  \mathbb{E}\| x^{s} - \mathcal{P}_S( x^{s}) \|^2 +
 \sum_{t'=0}^{m-1}\frac{2 \gamma}{L_{\max}k}  \mathbb{E} \left  (f( \mathcal{P}_S( x_{t'}^{s+1} ))- f(x_{t'}^{s+1}) \right ) -
\\ &  & \nonumber  \sum_{t'=0}^{m-1} \frac{1}{k} \left (  1- \frac{ \gamma\tau \theta'}{k} - \frac{2 ({L_{res} \theta_1+ L_{nor} \theta_2}) \gamma}{k^{1/2} L_{\max}}\right ) \cdot
 \mathbb{E}(\| x_{t'}^{s+1}-\overline{x}_{t'+1}^{s+1} \|^2)
\\ &  & \nonumber + \frac{2 \gamma}{L_{\max}}  \sum_{t'=0}^{m-1} \mathbb{E} \left (  f ({x}_{t'}^{s+1}) - f ({x}_{t+1}^{s+1}) \right )
 + \frac{ L_{nor} \gamma}{L_{\max}} \mathbb{E}   \left  \| \sum_{t'=0}^{m-1}\left ( x_{t'}^{s+1}-{x}_{t'+1}^{s+1} \right ) \right  \|^2
\\ &  & \nonumber +  \frac{2 \gamma}{L_{\max}}  \sum_{t'=0}^{m-1} \left ( \frac{1}{k} \mathbb{E} g(\mathcal{P}_S( x^{s+1}_{t'})) -  \mathbb{E} g( x^{s+1}_{t'+1})   + \frac{k-1}{k} \mathbb{E} g( x^{s+1}_{t'})
  \right )
\\ & \leq & \nonumber  \mathbb{E}\| x^{s} - \mathcal{P}_S( x^{s}) \|^2 + \frac{2 \gamma}{L_{\max}k}\sum_{t'=0}^{m-1}  \left ( F^* - \mathbb{E}F(x_{t'}^{s+1}) \right )
  + \frac{2 \gamma}{L_{\max}} \sum_{t'=0}^{m-1} \left ( \mathbb{E}F(x_{t'}^{s+1}) - \mathbb{E}F(x_{t'+1}^{s+1}) \right )
\\ &  & \nonumber -  \sum_{t'=0}^{m-1} \frac{1}{k} \left (  1-  \Lambda_{nor} \gamma - \frac{ \gamma\tau \theta'}{n} - \frac{2 (\Lambda_{res} \theta_1 +\Lambda_{nor} \theta_2 ) \gamma}{k^{1/2} }\right ) \cdot
 \mathbb{E}(\| x_{t'}^{s+1}-\overline{x}_{t'+1}^{s+1} \|^2)
\\ & \leq & \nonumber  \mathbb{E}\| x^{s} - \mathcal{P}_S( x^{s}) \|^2 + \frac{2 \gamma}{L_{\max}k}\sum_{t'=0}^{m-1}  \left ( F^* - \mathbb{E}F(x_{t'}^{s+1}) \right )
   + \frac{2 \gamma}{L_{\max}}  \left ( \mathbb{E}F(x^{s}) - \mathbb{E}F(x^{s+1}) \right )
\end{eqnarray}
where the second inequality uses (\ref{equdef1}),  the final inequality comes from $ 1-  \Lambda_{nor} \gamma - \frac{ \gamma\tau \theta'}{n} - \frac{2 (\Lambda_{res} \theta_1 +\Lambda_{nor} \theta_2 ) \gamma}{n^{1/2} } \geq 0$. Define $S(x^s) =  \mathbb{E}  \| x_{t} - \mathcal{P}_S( x^s) \|^2  +  \frac{2 \gamma}{L_{\max}} \mathbb{E} \left ( F( x^s) - F^* \right )  $. According to (\ref{equ_lema1_10}), we have
\begin{eqnarray} \label{equ_lema1_12}
 S(x^{s+1})
  \leq   S(x^s)  - \frac{2 \gamma}{L_{\max}k}\sum_{t'=0}^{m-1} \mathbb{E} \left ( F( x^{s+1}_{t'}) - F^* \right )
 \leq    S(x^s)  - \frac{2 m \gamma}{ L_{\max} k}\mathbb{E} \left ( F( x^{s+1}) - F^* \right )
\end{eqnarray}
where  the second inequality comes from the monotonicity of $\mathbb{E}  F(x_{t}^s)$. According to (\ref{equ_lema1_12}), we have
\begin{eqnarray} \label{equ_lema1_13}
S(x^{s})
 \leq    S(x^0)  - \frac{2 m \gamma s}{ L_{\max} k}\mathbb{E} \left ( F( x^{s}) - F^* \right )
\end{eqnarray}
Thus, the sublinear convergence rate (\ref{theorem_equ_1}) for general smooth convex function $f$  can be obtained from (\ref{equ_lema1_13}).

If the optimal strong convexity for the smooth convex function $f$   holds with $l>0$, we have (\ref{equ_lema1_13.1}) as proved in (A.28) of \cite{liu2015asynchronous}.
\begin{eqnarray} \label{equ_lema1_13.1}
\mathbb{E} \left ( F( x^{s}) - F^* \right ) \geq \frac{L_{\max } l}{2(l \gamma +L_{\max})} S(x^{s})
\end{eqnarray}
Thus, substituting (\ref{equ_lema1_14}) into (\ref{equ_lema1_12}), we have
\begin{eqnarray} \label{equ_lema1_14}
S(x^{s+1})
 \leq   S(x^s)  - \frac{2 m \gamma l}{2k(l \gamma +L_{\max})} S(x^{s+1})
\end{eqnarray}
Based on (\ref{equ_lema1_14}), we have (\ref{equ_lema1_15}) by induction.
\begin{eqnarray} \label{equ_lema1_15}
S(x^{s})
 \leq  \left ( \frac{1}{1+\frac{2 m \gamma l}{2k(l \gamma +L_{\max})}} \right )^s S(x^0)
\end{eqnarray}
Thus, the linear convergence rate (\ref{theorem_equ_2}) for  the optimal strong convexity on $f$ can be obtained from (\ref{equ_lema1_15}).
\end{proof}

\section{Conclusion}\label{conclusion}
In this paper, we propose an asynchronous stochastic  block coordinate descent algorithm with the accelerated technology of variance reduction  (AsySBCDVR), which are with lock-free in the implementation and analysis. AsySBCDVR is particularly  important because it  can scale well with the sample size and dimension simultaneously.  We prove that AsySBCDVR achieves a linear convergence rate when the function $f$ is with the optimal strong
convexity property, and a sublinear rate  when $f$ is with the  general convexity. More importantly, a near-linear speedup on a parallel system with shared memory can be obtained.

\nocite{langley00}


\bibliography{sample}

\end{document}